\documentclass{article} 
\usepackage{iclr2026_conference,times}

\usepackage{amsmath,amsfonts,bm}









\def\eqref#1{equation~\ref{#1}}









\def\1{\bm{1}}










\DeclareMathAlphabet{\mathsfit}{\encodingdefault}{\sfdefault}{m}{sl}
\SetMathAlphabet{\mathsfit}{bold}{\encodingdefault}{\sfdefault}{bx}{n}













\usepackage[utf8]{inputenc} 
\usepackage[T1]{fontenc}    
\usepackage{hyperref}       
\usepackage{url}            
\usepackage{booktabs}       
\usepackage{graphicx}      
\usepackage{amsfonts}       
\usepackage{nicefrac}       
\usepackage{microtype}      
\usepackage{xcolor}      
\usepackage{textgreek}
\usepackage{amsmath, amssymb, amsthm} 
\usepackage{enumitem}
\usepackage{natbib}
\usepackage{tabularx, booktabs}
\usepackage{cleveref}
\usepackage{caption} 
\usepackage{wrapfig}
\usepackage{subfigure}
\usepackage{amssymb}
\newcommand{\ourmethod}{\textsc{SuperMAN}}
\newcommand{\featuregroupgnan}{\textsc{ExtGNAN}}
\newcommand{\h}{\mathbf{h}}
\newtheorem{theorem}{Theorem}[section]

\usepackage{hyperref}
\usepackage{url}

\title{SuperMAN: Interpretable and Expressive Networks over Temporally Sparse Heterogeneous Data}

\author{Maya Bechler-Speicher$^{1}$\footnotemark[1], ~~ Andrea Zerio$^{2}$\footnotemark[1],  \AND 
Maor Huri$^{7}$,     ~~ 
Marie Vibeke Vestergaard$^{2}$,  \AND 
Ran Gilad-Bachrach$^{6}$,      ~~
Tine Jess$^{2,3}$,      ~~
Samir Bhatt$^{4,5}$,     ~~
Aleksejs Sazonovs$^{2}$ \\
\\
$^{1}$Meta \\
$^{2}$Center of Excellence for Molecular Prediction of IBD (PREDICT),\\
\hspace*{1em}Department of Clinical Medicine, Aalborg University \\
$^{3}$Department of Gastroenterology \& Hepatology, Aalborg University Hospital \\
$^{4}$University of Copenhagen \\
$^{5}$Imperial College London \\
$^{6}$Department of Biomedical Engineering, Tel-Aviv University \\
$^{7}$Sagol School of Neuroscience, Tel-Aviv University
}

%

\iclrfinalcopy 
\begin{document}

\maketitle
\footnotetext[1]{These authors contributed equally to this work.}
\begin{abstract}
Real-world temporal data often consists of multiple signal types recorded at irregular, asynchronous intervals. For instance, in the medical domain, different types of blood tests can be measured at different times and frequencies, resulting in fragmented and unevenly scattered temporal data. Similar issues of irregular sampling occur in other domains, such as the monitoring of large systems using event log files. Effectively learning from such data requires handling sets of temporally sparse and heterogeneous signals. In this work, we propose Super Mixing Additive Networks (\ourmethod{}), a novel and interpretable-by-design framework for learning directly from such heterogeneous signals, by modeling them as sets of implicit graphs.
\ourmethod{} provides diverse interpretability capabilities, including node-level, graph-level, and subset-level importance, and enables practitioners to trade finer-grained interpretability for greater expressivity when domain priors are available.
\ourmethod{} achieves state-of-the-art performance in real-world high-stakes tasks, including predicting Crohn’s disease onset and hospital length of stay from routine blood test measurements and detecting fake news. Furthermore, we demonstrate how \ourmethod’s interpretability properties assist in revealing disease development phase transitions and provide crucial insights in the healthcare domain.
\end{abstract}

\section{Introduction}\label{sec:intro}

Modern clinical data consist of diverse types of signals, often collected at irregular and asynchronous time intervals. For example, a patient’s medical record may include a set of blood tests taken over their lifetime, where each type of test is performed at its own frequency. As a result, the data can be viewed as a set of sparse temporal signals, each with its own fragmented temporal structure.
Similar patterns occur in other domains. For instance, the spread of news articles in social media networks often follow asynchronous tree-like patterns of dissemination.
Another example is system event logs, which typically include different types of events occurring at varying times and rates.

A common approach for learning from such data is to align the signals to a fixed-size time grid, thereby enforcing a shared timeline. This is typically achieved by trimming or aggregating signals and filling in missing values through interpolation or learned imputation models \citep{cao2018brits, tashiro2021csdi, wu2022timesnet, du2023saits}.
However, these procedures can lead to substantial information loss and ignore the informative patterns in the irregularity itself, such as the varying time intervals between different measurement types.

In this work, we introduce Super Mixing Additive Networks (\ourmethod{}), a novel framework designed to learn directly from heterogeneous, temporally sparse and irregular signals, without information loss or imputation. \ourmethod{} models these signals as sets of implicit graphs by introducing a distance function between each pair of signals within each type of signal.
For example, in blood test data, each biomarker type can be modeled as a directed graph whose nodes correspond to individual measurements, with the distance between each pair of measurements being the time delta between them. These implicit graphs may differ in structure, size, and feature space, reflecting the diversity of real-world heterogeneous signals.

\ourmethod{} allows for multi-resolution interpretability capabilities, including node-level, graph-level, and subset-level importance. It also enables practitioners to trade fine-grained interpretability for greater expressivity when domain priors are available. Specifically, signal graphs can be grouped into subsets, allowing for more expressive modeling of non-linear interactions within the subsets. This shifts interpretability from individual nodes or graphs to the subset level. \ourmethod{} builds on Graph Neural Additive Networks (GNAN) \citep{bechler2024intelligible}, an interpretable class of GNNs that operate on individual nodes or graphs. In contrast, \ourmethod{} is expressly designed for \emph{sets of implicit graphs} and introduces a graph-grouping mechanism that aggregates signals within subsets of graphs while preserving an additive decomposition across subsets. Rather than applying GNAN directly, \ourmethod{} employs an extended and more flexible variant, which we denote by ExtGNAN. Within each graph, ExtGNAN generalises GNAN's univariate feature-shape networks to multivariate feature groups, allowing non-linear dependencies among related features while retaining additive transparency at the group level. In analogy to signal grouping, ExtGNAN supports grouping features into subsets, thereby replacing feature-level interpretability with subset-level interpretability, such as importance scores. We prove that grouping signals or features strictly increases expressivity, and that \ourmethod{} is strictly more expressive than GNAN.

We demonstrate the effectiveness of \ourmethod{} on real-world high-stakes medical datasets, where it achieves state-of-the-art (SoTA) performance while also providing valuable clinical and biological insights through its interpretability. In addition, we show that \ourmethod{} attains SoTA performance in fake news detection, highlighting its flexibility in operating on sets of graphs with arbitrary structure. This contrasts with existing approaches, which are restricted to sets of path-like signals and offer no interpretability. Finally, we demonstrate how \ourmethod{}’s interpretability reveals phase transitions and provides crucial insights in healthcare.

Our main contributions are as follows:

\begin{enumerate}
\item We introduce \ourmethod{}, a novel framework for learning directly from sets of sparse, irregular temporal signals, without information loss or imputation.
\item We allow practitioners to integrate domain priors, when available, by grouping features or signal types into subsets. This shifts interpretability from fine-grained (node- or feature-level) to subset-level, while strictly improving expressivity. This capability is particularly valuable in the medical domain, where such priors are common.
\item We provide a theoretical analysis proving that groupings of features and signals make \ourmethod{} strictly more expressive.
\item We demonstrate the effectiveness of \ourmethod{} on real-world high-stakes medical datasets and fake news detection, achieving state-of-the-art performance in both domains.
\item We show that \ourmethod{} provides valuable interpretability capabilities, including node-level, graph-level, and subset-level importance, which yield meaningful clinical and biological insights.

\end{enumerate}

\section{Related work}\label{sec:related}

\paragraph{Graph Neural Additive Networks}
Graph Neural Networks (GNNs) \citep{kipf2016semi, gilmer2017neural, velickovic2017graph, xu2018powerful} have become the dominant framework for learning over graph-structured data, enabling flexible representation learning across diverse domains such as healthcare \citep{paul2024systematic, ochoa2022graph, peng2023improving}, chemistry \citep{reiser2022graph, jumper2021highly} and social networks \citep{li2023survey, sharma2024survey}, among others. GNNs leverage both the graph topology and node features to compute learned representations for individual nodes or for entire graphs. Recently, \citet{bechler2024intelligible} introduced Graph Neural Additive Networks (GNANs), a novel interpretable-by-design graph learning framework inspired by generalized additive models (GAMs) \citep{hastie1986generalized_a, hastie1987generalized_b}.
GNAN applies univariate neural networks to each feature of the nodes separately, and then linearly combines their outputs across nodes to produce node-level and graph-level representations. 
As features are not mixed non-linearly, GNAN is fully interpretable, and provides feature-level and node-level interpretability which shows exactly how each feature and each node contributes to the final target variable.

\paragraph{Learning from sparse data}
In many real-world settings, data often presents missing values from irregular sampling and variable feature availability. Recurrent models \citep{cao2018brits} treat missing data as latent variables, while attention-based methods \citep{du2023saits, wu2022timesnet, tipirneni2022self, labach2023duett} reconstruct them via contextual masking and temporal blocks. Diffusion models \citep{tashiro2021csdi, alcaraz2022diffusion, senane2024self, dai2024sadi} learn conditional distributions over missing values using score-based processes. Graph-based approaches use GNNs to model feature dependencies through bipartite graphs, adaptive message passing, or spatio-temporal attention \citep{you2020handling, cini2022fillinggapsmultivariatetime, marisca2022learning, ye2021spatial, chen2024multistage}. These imputation methods often distort dynamics and may not improve prediction \citep{qian2025randommissingnessclinicallyrethinking}. An alternative is to model sparsity directly. Neural and Latent ODEs \citep{chen2018neural, rubanova2019latent} address irregular gaps via continuous dynamics but are compute-intensive and rely on missingness encodings. Recent models \citep{zhang2021graph} have used graph representations to capture sparsity without imputation. Importantly, none of the aforementioned methods offer built-in multi-grain interpretability in the way that \ourmethod{} does.

\section{Super Mixing Additive Networks}
\label{sec:method}

\begin{figure}
    \centering
    \includegraphics[width=1\linewidth]{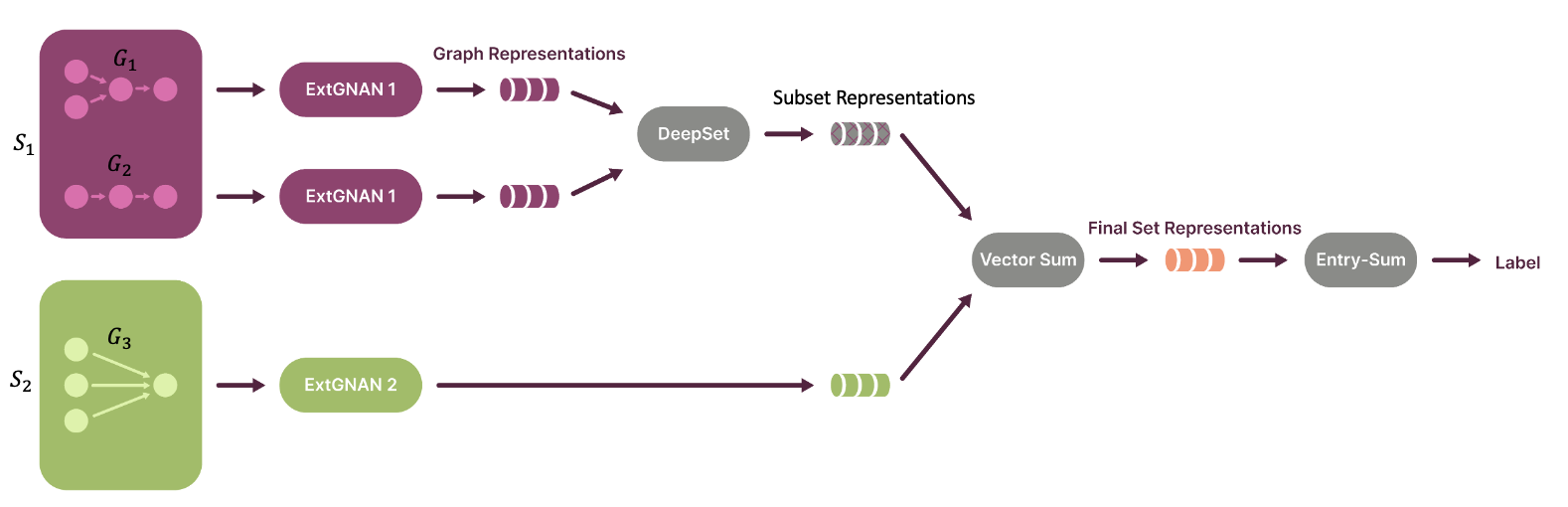}
    \caption{In this example, the input is a set of three graphs, $G_1, G_2, G_3$, grouped into two subsets $S_1$ and $S_2$.
Within each subset, the same ExtGNAN instance is applied to all graphs to produce their graph-level representations. For subsets containing multiple graphs, a DeepSets module aggregates these graph representations into a single subset representation. For subsets of size one, the subset representation is simply the graph representation itself. The final set representation is then obtained by summing the subset representations, and the final label prediction is produced by summing the entries of this set representation.}
    \label{fig:architecture}
\end{figure}

In this section, we present \ourmethod{}, an interpretable and flexible method for learning over sets of arbitrary graphs.

\paragraph{Preliminaries}
\ourmethod{} acts on a set of $m$ graphs $S = \{G_1, \dots, G_m\}$, which can be directed or undirected.
Each node $v\in G_i$, $1\leq i \leq m$ is associated with a feature vector $x_v \in \mathbb{R}^d$ and a time-stamp $t_{v}$.
\ourmethod{} utilizes the distances between each pair of nodes in each graph, denoted by $\Delta_{uv}$, where \[
\Delta_{uv} =
\begin{cases}
t_u - t_v, & \text{if there exists a path from $u$ to $v$} \\[6pt]
0, & \text{otherwise.}
\end{cases}
\]
Graphs may be provided directly through an explicit layout, e.g., the news propagation graphs we evaluate \ourmethod{} over in ~\cref{sec:empirical}.
More commonly, however, sparse temporal heterogeneous data does not come with a predefined graph structure. In such cases, we construct directed path-graphs for each signal type based on its measurement time stamps, as demonstrated for the medical dataset in \cref{sec:empirical}.
For instance, in patient blood test records, each graph corresponds to a specific biomarker. Nodes in each graph represent individual measurements of that biomarker, annotated with the observed test value as a feature and the associated time stamp.
We mark vectors with bold, and denote the entry $c$ of a vector $\h$ by $[\h]_c$, and the set of entries corresponding to a set of features $S$ by $[\h]_S$.

\paragraph{Signal Grouping}
To incorporate domain priors, the graphs in $S$ can be partitioned into $k$ disjoint subsets $S_1, \dots, S_k$ with $\bigcup_{i=1}^k S_i = S$.
If at least one subset $S'$ contains multiple graphs, the model’s expressivity increases, as we prove in \Cref{thm:gman_expressivity}, at the cost of shifting interpretability from individual nodes and graphs in $S'$ to the subset level.
In practice, this means we can attribute importance to $S'$ as a whole, but not to its individual components.
This trade-off—enhanced expressivity at reduced granularity of interpretability is especially valuable in domains such as medicine, where priors often suggest natural groupings of signals and interpretability is only needed at the subset level. This is demonstrated in \Cref{sec:empirical}.

\ourmethod{} linearly aggregates representations of the subsets of $S$ to form a final set representation, and then assigns a single label to $S$.

First, \ourmethod{} applies a function $\Phi_i$ to each subset $S_i$ to obtain a representation of the subset $S_i$, denoted as $\h_i\in \mathbb{R}^{d}$. 

$$
\h_i = \Phi_i(S_i),
$$

Then, it produces a representation for the whole set, $\h_S$ by summing the subsets' $\h_S = \sum_{i=1}^k h_i$. 
Finally, to produce the label, it sums over the $d$ entries of $\h_S$. Overall:

\begin{equation}\label{eq:gman}
    \ourmethod(S) = \sum_{c=1}^d \sum_{i=1}^k [\Phi_i(S_i)]_c
\end{equation}

Where $\Phi_i(S_i) = \h_{S_i}$ is a representation of the subset $S_i$.

For subsets of size one, $\Phi_i(S_i)$ applies an Extended GNAN (\featuregroupgnan), as described in \Cref{sec:ExtGNAN}. For subsets containing multiple graphs, a featuregroupgnan{} is applied to each graph, followed by a DeepSet aggregation~\citep{zaheer2018deepsets} over the resulting vectors. Importantly, each subset is assigned its own \featuregroupgnan{}, and all graphs within a subset share the same one.
A DeepSet first applies a neural network (NN) $f: \mathbb{R}^d \rightarrow \mathbb{R}^d$ for each vector in the set $\{h_l\}_{G_l\in S_i}$, sums the results, and then applies another NN $g:  \mathbb{R}^d \rightarrow \mathbb{R}^d$.

$$
g\left(\sum_{i\in S_2} f(h_i)\right)
$$

Here, $g$ and $f$ are NNs of arbitrary depth and width.
A high-level visual overview of \ourmethod{} is presented in \Cref{fig:architecture}.
We now turn to define \featuregroupgnan.

\subsection{ExtGNAN} \label{sec:ExtGNAN}
In GNAN, univariate NNs are applied to each feature of each node in isolation, to learn a representation for a graph. 
This has the benefit of generating interpretable models as features do not mix non-linearly. Nonetheless, when interactions between features are crucial for the task, or feature-level interpretability is not required for all features, it may result in sub-par performance. Therefore, \featuregroupgnan{} extends GNAN by allowing multivariate NNs to operate on groups of features to gain accuracy at the cost of reducing the feature-level interpretability only for features that are grouped together, and obtaining interpretability for their subset as a whole instead. 

Assume that the features are partitioned into \( K \) subsets $\{F_l\}_{l=1}^K$. For any subset of features greater than one, \featuregroupgnan{} applies a multivariate NN for all the features in the subset together, instead of a univariate NN for each one separately. 
To learn a representation of a graph $G$, \featuregroupgnan{} first computes representations for the nodes of $G$ as follows. 

\featuregroupgnan{} learns a distance function $\rho(x;\theta): \mathbb{R}\rightarrow  \mathbb{R}$ and a set of feature shape functions 
 $\{\psi_l\}_{l=1}^K,  \psi_l(X;\theta_k): \mathbb{R}^{|F_l|} \rightarrow \mathbb{R}^{|F_l|}$.
 Each of these functions is a NN of arbitrary depth. For brevity, we omit the parameterization $\theta$ and $\theta_k$ for the remainder of this section.

The entries of the representation of node \( j \) corresponding to the indices of the features in $F_l$, denoted as \( [\mathbf{h}_j]_{F_l} \), is computed by summing the contributions of the features in the subset \( F_l \) from all nodes in the graph:
\[
[\mathbf{h}_j]_{F_l} = \sum_{w \in V}\rho\left(\Delta(w, j)\right) \cdot \psi_l\left([\mathbf{X}_w]_{F_l}\right),
\]
where $\Delta(w, j) = t_w - t_j$ and $[\mathbf{X}_w]_{F_l}$ are the features of node $w$ corresponding to the subset $F_l$.

Overall, the full representation of node \( j \) can be written as:
\[
\mathbf{h}_j = \big([\mathbf{h}_j]_{F_1}, [\mathbf{h}_j]_{F_2}, \dots, [\mathbf{h}_j]_{F_K}\big).
\]

Then \featuregroupgnan{} produces a graph representation by summing the node representations,
\begin{equation}\label{eq:graph_rep_ext}
    \mathbf{h}_G = \sum_{i \in V} \mathbf{h}_i.
\end{equation}

This concludes the description of \featuregroupgnan{}, which computes graph-level representations. 
We provide a complexity analysis of \ourmethod{} in the appendix, where we also discuss how $\rho$ can be masked to adapt to any complexity limitation, including a linear one.
Next, we describe how \ourmethod{} combines these representations across sets and enables multi-level interpretability.

\subsection{Node, graph and subset importance}
\label{interpretability-formulation}
\ourmethod{} retains all interpretability properties of GNAN, including feature-level and node-level importance. However, it extends beyond GNAN by operating on sets of graphs rather than single graphs, enabling additional forms of interpretability such as graph-level and subset-level importance. Because \ourmethod{} allows a flexible trade-off between interpretability and expressivity, permitting non-linear mixing within graph subsets, some adaptations are required to obtain importance scores. 
A key property of \ourmethod{}’s interpretability is that its importance scores directly reflect the contribution of each node, graph, or subset to the predicted label, since these terms are combined additively to produce the final output.
\Cref{sec:empirical} illustrates how node-level importances yield insightful real-world insights.

We can extract the total contribution of each node \( j \) to the prediction by summing the contributions of the node across all feature subsets. This is only valid when the node belongs to a graph that is not combined non-linearly with other graphs, i.e., it belongs to a subset of size one. 

Therefore, the contribution of node $j$ is
 \begin{equation}\label{eq:node_importance}
      \text{TotalContribution}(j) =  \sum_{l=1}^K [\mathbf{h}_j]_{F_k}=  
      \sum_{w \in V}\rho\left(\Delta(w, j)\right) \sum_{l=1}^K \psi_k\left([\mathbf{x}_w]_l, l\in F_k\right).
 \end{equation}

The contribution of a graph $G$ is then

$$
  \text{TotalContribution}(G) = \sum_{v\in G} \text{TotalContribution}(v).
$$

For graphs that are mixed non-linearly, i.e., graphs that belong in subsets of size greater than one, we provide instead the total contribution of the set to the final prediction
\begin{equation}\label{eq:set_importance}
    \text{TotalContribution}(S) = \sum_{l=1}^K [\mathbf{S}]_{F_k}.
\end{equation}

\subsection{Expressivity properties}\label{sec:theory}

In this section, we provide a theoretical analysis of the expressiveness of \ourmethod{}. Proofs are provided in the Appendix

\begin{theorem}
    \ourmethod{} is strictly more expressive than GNAN.
\end{theorem}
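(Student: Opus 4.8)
The plan is to prove two things: that \ourmethod{} can reproduce every GNAN model, and that some function realizable by \ourmethod{} is provably out of reach for GNAN. Write $\mathcal{G}$ for the class of functions realizable by GNAN and $\mathcal{M}$ for that of \ourmethod{}, both regarded on a common input domain (a single graph $G$, identified with the singleton set $\{G\}$; equivalently, GNAN is extended to sets by summing its graph-level outputs, which is exactly the sub-case of \ourmethod{} in which every subset $S_i$ and every feature group $F_\ell$ is a singleton). The theorem is then the statement $\mathcal{G}\subsetneq\mathcal{M}$.

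\textbf{Inclusion $\mathcal{G}\subseteq\mathcal{M}$.} I would instantiate \ourmethod{} with the trivial signal partition, so each $S_i$ has size one and no DeepSet aggregation is triggered, and instantiate each \featuregroupgnan{} with the trivial feature partition $F_\ell=\{\ell\}$. Under the singleton feature partition the multivariate shape maps $\psi_\ell:\mathbb{R}^{|F_\ell|}\to\mathbb{R}^{|F_\ell|}$ are precisely GNAN's univariate shape functions, the node update $[\mathbf{h}_j]_{F_\ell}=\sum_{w\in V}\rho(\Delta(w,j))\,\psi_\ell([\mathbf{X}_w]_{F_\ell})$ is GNAN's node update coordinatewise, the graph pooling in \eqref{eq:graph_rep_ext} coincides with GNAN's, and the readout in \eqref{eq:gman} coincides with GNAN's. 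Hence every GNAN model is reproduced verbatim, giving $\mathcal{G}\subseteq\mathcal{M}$.

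\textbf{Strictness $\mathcal{M}\setminus\mathcal{G}\neq\emptyset$.} I would exhibit a concrete separating target. Restrict to graphs with a single node $v$ and two features, and take $T(x_1,x_2)=x_1x_2$ on $[0,1]^2$. On a single node every GNAN model outputs $\rho(0)\sum_{\ell}\psi_\ell(x_\ell)$, an additively separable function $\phi_1(x_1)+\phi_2(x_2)$; more generally, on any fixed graph structure GNAN's output is $\sum_{w,\ell}c_w\,\psi_\ell(x_{w,\ell})$ with constants $c_w=\sum_j\rho(\Delta_{wj})$, i.e.\ a sum of univariate terms with no genuine cross-feature term, for every admissible parameterization (including the degenerate case $\rho(0)=0$, which only weakens GNAN). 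The four-corner functional $D(F)=F(1,1)-F(1,0)-F(0,1)+F(0,0)$ vanishes on every additively separable $F$ but equals $1$ on $T$, so $\sup_{[0,1]^2}|F-T|\ge 1/4$ for every GNAN-realizable $F$: GNAN cannot even approximate $T$. Conversely, \featuregroupgnan{} with a single feature group $F_1=\{1,2\}$ gives, on the single node, the readout $\rho(0)\big(\psi(x_1,x_2)_1+\psi(x_1,x_2)_2\big)$ with $\psi:\mathbb{R}^2\to\mathbb{R}^2$ an unrestricted network and $\rho(0)$ freely set to $1$; by universal approximation this matches $T$ to arbitrary accuracy (exactly, if the activation needed to produce a product term is allowed). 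Since this \featuregroupgnan{} is instantiated inside \ourmethod{} (one graph, one size-one subset), we get $T\in\mathcal{M}$, hence $\mathcal{G}\subsetneq\mathcal{M}$. An alternative witness uses signal rather than feature grouping: two single-node, single-feature graphs placed in one subset, where the DeepSet $g\!\big(\sum f(h_i)\big)$ can realize a symmetric non-additive function of $(h_1,h_2)$ unavailable to the additive-over-graphs GNAN; this is the content of \Cref{thm:gman_expressivity} and may be cited in place of the explicit construction.

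\textbf{Main obstacle.} The delicate half is the lower bound in the strictness step: one must characterize the entire function class realizable by GNAN across all parameterizations and verify that additive separability across node–feature pairs is a genuine invariant of that class (using that $\rho$ only rescales and is multiplied by per-feature univariate maps, so no term depending jointly on two distinct coordinates of the feature matrix can ever appear), and then supply a functional — here the four-corner defect $D$ — that is identically zero on the whole GNAN class yet bounded away from zero on the witness. One must also fix the ambient conventions so that ``more expressive'' is unambiguously a containment of realizable-function classes on a shared input domain. The \featuregroupgnan{} side is routine once a universal-approximation statement for the shape networks $\psi_\ell$ is invoked.
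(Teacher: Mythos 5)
Your proof is correct and takes essentially the same route as the paper: both arguments hinge on the fact that GNAN's output is additively separable across feature coordinates and therefore cannot realize a function with nonzero mixed second difference on the corners of $\{0,1\}^2$ (the paper uses XOR with a threshold contradiction, you use the product $x_1x_2$ with the four-corner functional), while grouping the two features into one multivariate $\psi$ inside \featuregroupgnan{} realizes the witness. Your version is slightly more complete in that it explicitly verifies the containment $\mathcal{G}\subseteq\mathcal{M}$ via trivial partitions and upgrades non-representability to a quantitative inapproximability bound, both of which the paper leaves implicit.
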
 

The following theorem shows that a \ourmethod{} which is applied to subsets of graphs of size at least two, is more expressive than a \ourmethod{} that is applied to only subsets of size one:

\begin{theorem}\label{thm:gman_expressivity}
    Let $S$ be a set of graphs $\{G_i\}_{j=1}^m$. Let   $S_1 = \{S_i\}_{i=1}^m$ be a partition of $S$ such that $|S_i|=1$. Let $S_2 = \{S_i\}_{i=1}^k$ such that there exists $k$ with $|S_k|>1$. 
    with a subset partition $\{S_i\}_{i=1}^k$. Then a \ourmethod{} trained over $S_2$ is strictly more expressive than a \ourmethod{} trained over $S_1$. 
\end{theorem}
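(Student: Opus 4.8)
The statement ``strictly more expressive'' unpacks into two claims: (a) every function a \ourmethod{} can realise with the finest partition $S_1$ it can also realise with the coarser partition $S_2$; and (b) some function is realisable with $S_2$ but with no choice of $S_1$-weights. I would settle (a) by collapsing each DeepSets aggregator to the identity, and spend the effort on (b), whose crux is that a \ourmethod{} over $S_1$ always computes a function that is \emph{additive across the individual graphs}, whereas a subset of size $\ge 2$ is routed through a DeepSets module that can model genuine interactions among the graphs it contains.

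\paragraph{Inclusion $S_1\subseteq S_2$.} For each singleton block of $S_2$, reuse the \featuregroupgnan{} that the $S_1$ model assigns to that graph. For the block $S_k=\{G_{i_1},\dots,G_{i_r}\}$ with $r\ge2$ (assume without loss of generality it is the only non-singleton block), take both DeepSets networks $f,g$ to be the identity; then $\Phi_k(S_k)=\sum_{l=1}^{r}\featuregroupgnan(G_{i_l})$ and summing its $d$ coordinates contributes $\sum_l\sum_c[\featuregroupgnan(G_{i_l})]_c$, exactly the additive-over-graphs term that $S_1$ produces for those graphs once the block shares one \featuregroupgnan{}. (The one delicate point is that $S_1$ may put distinct \featuregroupgnan{}s on graphs that $S_2$ groups; since a DeepSets layer is permutation-invariant, exact inclusion holds for the $S_1$-functions that do not separate such graphs, which is the regime of interest when the grouped signals have distinct structure.) With the unchanged singleton terms, this reproduces the whole $S_1$-map.

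\paragraph{Strictness.} Clamp every graph outside $S_k$, and all but two graphs inside $S_k$, to fixed configurations; let the two free graphs $G_a,G_b$ range over two-element families $G_a\in\{A_a,B_a\}$, $G_b\in\{A_b,B_b\}$ chosen so a non-constant \featuregroupgnan{} separates each pair. On this $2\times2$ grid a \ourmethod{} over $S_1$ outputs
\[
F(G_a,G_b)=\alpha(G_a)+\beta(G_b)+C,
\]
since distinct graphs contribute additively and independently, so $F(A_a,A_b)+F(B_a,B_b)=F(A_a,B_b)+F(B_a,A_b)$. For $S_2$, choose the shared \featuregroupgnan{} so that the four configurations map to four distinct vectors, choose $f$ (universal on finite point sets) so that the four grid-wise sums $f(\featuregroupgnan(G_a))+f(\featuregroupgnan(G_b))$ are pairwise distinct, and then let $g$ map those four values to reproduce, up to the additive constant coming from the clamped graphs, the parity function $\mathrm{XOR}(G_a,G_b)$. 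Parity violates the displayed cross-difference identity, so no $S_1$-weights reproduce it and the inclusion is strict; this proves \Cref{thm:gman_expressivity}.

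\paragraph{Main obstacle.} The additivity obstruction for $S_1$ is conceptually central but, after restricting to a $2\times2$ grid, reduces to the one-line cross-difference identity above; I expect the real work to be the $S_2$ construction — realising an \emph{exactly} non-additive target through the DeepSets pair while the shared \featuregroupgnan{} acts only as a separating encoder. This requires (i) that \featuregroupgnan{} is non-constant on each chosen family and injective on the four configurations, which is immediate for single-node graphs where $h_G=\rho(0)\,\psi(x)$ and handled in general by varying node features so $\psi$ is non-degenerate, and (ii) that the encoded points are in general position so the four-point input to $g$ is interpolable. Extending from this binary template to arbitrary $r\ge2$ and arbitrarily structured grouped graphs is then routine, since every clamped graph contributes only a constant.
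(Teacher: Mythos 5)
Your proof is correct and its core coincides with the paper's: both establish strictness by exhibiting a set-level XOR/parity target, showing that the singleton partition forces an output that is additive across the graphs (your cross-difference identity $F(A_a,A_b)+F(B_a,B_b)=F(A_a,B_b)+F(B_a,A_b)$ is the same obstruction the paper derives from its four threshold inequalities), and then realising parity through the DeepSets pair --- the paper simply takes single-node binary graphs with the identity encoder and the concrete choice $f(x)=x$, $g(s)=s(2-s)$, where you invoke universality of $f,g$ on a four-point set plus a clamping argument for the remaining graphs. The one place you go beyond the paper is the inclusion direction ($S_1$-realisable implies $S_2$-realisable), which the paper omits entirely even though ``strictly more expressive'' requires it; your observation that exact inclusion can fail when $S_1$ assigns distinct \featuregroupgnan{} instances to graphs that $S_2$ forces to share one (the permutation-invariant DeepSets cannot recover graph identity once their shared-encoder representations coincide) is a genuine caveat to the theorem as stated rather than a defect of your argument, and is worth flagging since neither your parenthetical restriction nor the paper's proof actually closes it.
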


\section{Empirical evaluation}\label{sec:empirical}
 In this section, we evaluate \ourmethod{} on real-world tasks, and demonstrate its interpretability properties \footnote{Implementation is provided in \\\url{https://github.com/azerio/Super-Mixing-Additive-Networks---SuperMAN}}.
 
\subsection{Medical predictions}
We evaluate \ourmethod{} on two high-impact clinical prediction tasks: LoS of intensive care patients and onset of CD. In both settings, each individual is represented as a set of time-stamped biomarker trajectories, where each trajectory forms a directed path graph with nodes corresponding to test results and edges encoding the time elapsed between measurements. This representation preserves the temporal structure of each biomarker independently while enabling joint reasoning across biomarkers during learning. 

\paragraph{Data} Next, we describe the two medical datasets used in our evaluation. 

\textit{P12 (ICU Length of Stay):} The PhysioNet2012 (P12) dataset, introduced by \cite{goldberger2000physiobank}, contains records from 11,988  intensive-care unit (ICU) patients, following the exclusion of 12 samples deemed inappropriate according to the criteria in Horn et al. (2020). For each patient, longitudinal measurements from 36 physiological signals were recorded over the initial 48 hours of ICU admission. Additionally, each patient has a static profile comprising 9 features, including demographic and clinical attributes such as age and gender. The dataset is labelled for a binary classification task: predicting whether or not the total LoS in the ICU exceeded 72 hours. The dataset is highly imbalanced, with $\sim$93\% positive samples. To balance the classes we perform batch minority class upsampling, following the example in \citet{zhang2021graph}.

\textit{Crohn’s Disease (CD Onset)}: The Danish health registries (DHR) are comprehensive, nationwide databases covering healthcare interactions for over 9.5 million individuals~\citep{pedersen2011danish}. A key resource is the Registry of Laboratory Results for Research (RLRR), which has collected laboratory test results from hospitals and general practitioners since 2015~\citep{arendt2020existing}. We first identify 8{,}567 individuals in the DHR who are later diagnosed with CD and use them as our patient class. We then construct a control pool by randomly sampling individuals from the DHR and downsampling by age to match the expected frequency of blood tests in the prediagnostic period, reflecting that CD typically manifests in early adulthood ( 20–30 years). From this age-matched control pool, we sample 8{,}567 controls, yielding two balanced classes. For each person, we extracted temporal trajectories of 17 routinely measured biomarkers, reflecting key physiological processes. The complete list and descriptions of these biomarkers are provided in the Appendix. The task is binary classification: predicting future CD onset from pre-diagnostic medical histories.

\begin{wraptable}{R}{0.58\textwidth}
\centering
\caption{Evaluation of \ourmethod{} on two real-world medical tasks. The metric reported is the mean AUPRC with standard deviation, calculated over 3 random seeds.}
\label{tab:med_perf}
\begin{tabular}{l c c}
\toprule
\textbf{Methods} & \textbf{LoS in ICU} & \textbf{CD onset}\\
\midrule
Transformer & 96.06 ±  0.32 &  75.60 ± 0.52\\
Trans-mean & 96.44 ±  0.17 & 75.96 ± 0.92\\
GRU-D & 95.91 ± 2.10 & 83.36 ± 0.40\\
SeFT & 95.89 ±  0.08 & 71.22 ± 2.30\\
mTAND & 93.02 ± 1.04 & 83.17 ± 0.67\\
DGM$^2$ & 97.00 ± 0.40 & 83.02 ± 0.56\\
MTGNN & 96.20 ± 0.78 & 75.26 ± 3.04\\
RAINDROP & 96.32 ± 0.13 & 82.60 ± 0.82\\
\midrule
\ourmethod & \textbf{97.41 ± 0.38} & \textbf{83.93 ± 0.27} \\
\bottomrule
\end{tabular}
\end{wraptable}

\paragraph{Setup}
We compare \ourmethod{} to $8$ baselines from \citet{zhang2021graph}, spanning both sequential and graph-based models, including: \textit{Transformer}~\citep{vaswani2017attention}, a vanilla self-attention model applied to irregular time series; \textit{Trans-mean}, which combines a Transformer with mean imputation of missing values; \textit{GRU-D}~\citep{che2016recurrentneuralnetworksmultivariate}, a gated recurrent model with decay terms that encode informative missingness; \textit{SeFT}~\citep{horn2020setfunctionstimeseries}, which treats each record as a set of timestamped feature-value pairs and aggregates them with permutation-invariant encoders; \textit{mTAND}~\citep{shukla2021multitimeattentionnetworksirregularly}, a multi-time attention architecture that outperforms a broad range of RNN- and ODE-based models on irregular data; \textit{DGM$^2$}~\citep{wu2021dynamicgaussianmixturebased} and \textit{MTGNN} ~\citep{wu2020connectingdotsmultivariatetime}, graph-based methods originally proposed for multivariate time-series forecasting; and \textit{Raindrop}~\citep{zhang2021graph}, a state-of-the-art graph model for sparse, irregular EHR time series. For the $P12$ we used the splits as in ~\citep{zhang2021graph}.
For CD, we randomly split the data into train (80\%), validation (10\%), and test (10\%) sets.
We conducted a grid search by training on the training set and evaluating on the validation set. We then selected the best-performing model over the validation set and report results over the test set. We define each individual biomarker as a unique signal type and group them according to coarse physiological categories.
Importantly, these clinically-guided groups are based on broad domain knowledge and do not rely on specialized expertise. We also examine data-driven grouping that does not rely on any priors, based on the insights presented in Subsection 4.1.1. Due to space limitations inTable~\ref{tab:cd_groupings}, App.~\ref{app:exp_setup}.
We tune the biomarker subsets over no grouping at all (i.e, full interpretability), and $5$ additional subset groupings motivated by public common clinical knowledge. 
To account for class imbalance, and as commonly done for these datasets, we report the average AUPRC score and standard-deviation of the selected configuration with $3$ random seeds. Additional details on the datasets, experimental setup, subset groupings, and hyperparameter configurations are provided in the Appendix.
We also present Calibration and robustness evaluations in Appendix~\ref{app:calibration_robustness}.

\paragraph{Results}
The results in Table~\ref{tab:med_perf} show that \ourmethod{} achieves the highest AUPRC across both medical prediction tasks. On Length of Stay in ICU, it improves upon the best baseline with a relative uplift of about $0.41$ points, while on Crohn’s Disease onset, it outperforms the best baseline by $0.57$ points. Notably, these improvements are achieved while \ourmethod{} also provides interpretability properties, as shown in detail in the next subsection.

\subsubsection{Clinical insights through interpretability}
\label{clinical_insights_through_interpretability}

\begin{figure}[t]
    \centering
     \subfigure[P12]{\label{fig:P12_node_level_contributions} \includegraphics[width=0.48\textwidth]{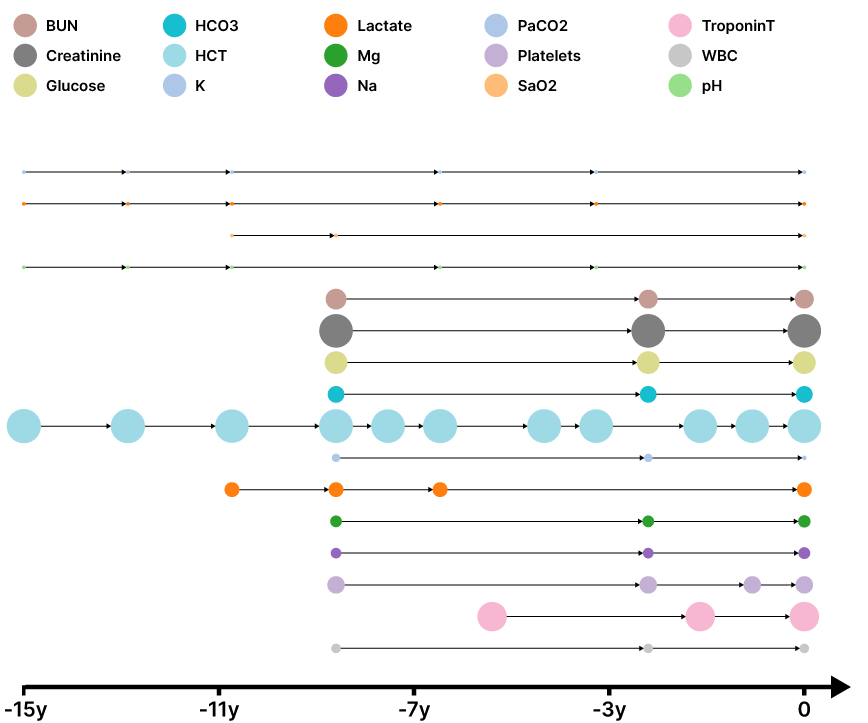}}\quad
      \subfigure[CD]{\label{fig:CD_node_level_contributions}\includegraphics[width=0.48\textwidth]{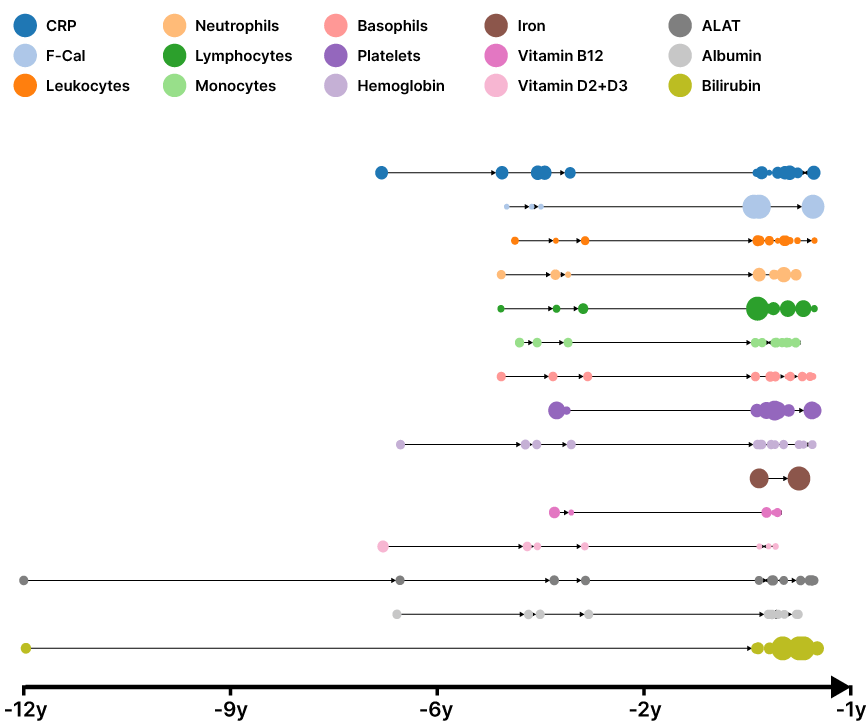}}
    \caption{Node-level importances for two individuals from (a) the P12 ICU LoS and (b) CD onset datasets. Node size indicates the exact node (measurement) contribution to the prediction.}
    \label{fig:node_level_contributions}
\end{figure}

Beyond predictive performance, a central strength of \ourmethod{} is that interpretability is built into the model by design, rather than added post hoc. This allows for fine-grained, temporally resolved explanations that go beyond global feature importance, enabling users to understand how and when specific biomarkers influence the model’s predictions. In high-stakes domains such as healthcare, this level of transparency is crucial. Clinical decision-making often depends not only on the outcome of a prediction but on a clear understanding of the reasoning behind it. Models that can provide such insights are far more likely to be trusted, audited, and integrated into clinical workflows.
To highlight the interpretability of \ourmethod{}, we conduct attribution analyses on both the CD and P12 datasets, examining how the model assigns importance across time and signals.

\paragraph{Critical phase detection through node-level importance}

In the \ourmethod{} framework, nodes represent individual signals within a biomarker trajectory. As such, highly influential nodes can highlight critical phases where specific signals most strongly impact the prediction. We use \Cref{eq:node_importance} to quantify node-level contributions across biomarkers in both the CD and P12 datasets. To comply with privacy and data protection legislation, CD data is anonymised via noise and temporal shifting. \Cref{fig:node_level_contributions} displays node-level importance across biomarker trajectories for two randomly selected individuals from each dataset, with node size reflecting the magnitude of each node’s contribution to the model’s prediction. In both clinical tasks, the model’s attributions appear consistent with established biomedical knowledge. For CD prediction, \ourmethod{} highlights key inflammatory and immune markers, such as F-Cal, platelets, and lymphocytes, as primary contributors. All of these are known to play central roles in disease onset \citep{vestergaard2023characterizing}. In the P12 example, the model assigns high importance to markers of renal function, liver injury, cardiac stress, and metabolic imbalance, aligning well with clinical predictors of severity in intensive care settings.

\paragraph{Total biomarker contribution}
In addition to node-level importance scores, \ourmethod{} also provides subset-level scores, quantifying the contribution of entire biomarker groups to the model’s prediction. This enables flexible, system-level interpretability, allowing users to assess the collective influence of physiologically related biomarkers on the target variable. 
We conduct a subset-level importance analysis on the CD dataset using two clinically motivated grouping strategies. In the first, we assess individual biomarkers to determine whether the model prioritizes features known to be associated with CD onset, allowing us to verify its alignment with established biomedical knowledge. In the second, we group biomarkers into clinically coherent subsets based on physiological function, such as immune response; inflammation; oxygen transport; and liver function; to examine whether the model captures system-level patterns consistent with disease progression. Full details on the clinical significance of these groupings are provided in the Appendix.
\begin{figure}
    \centering
     \subfigure[Single-biomarker groups \ourmethod{}]{\label{fig:CD_biom_contribution_singles} \includegraphics[width=0.47\textwidth]{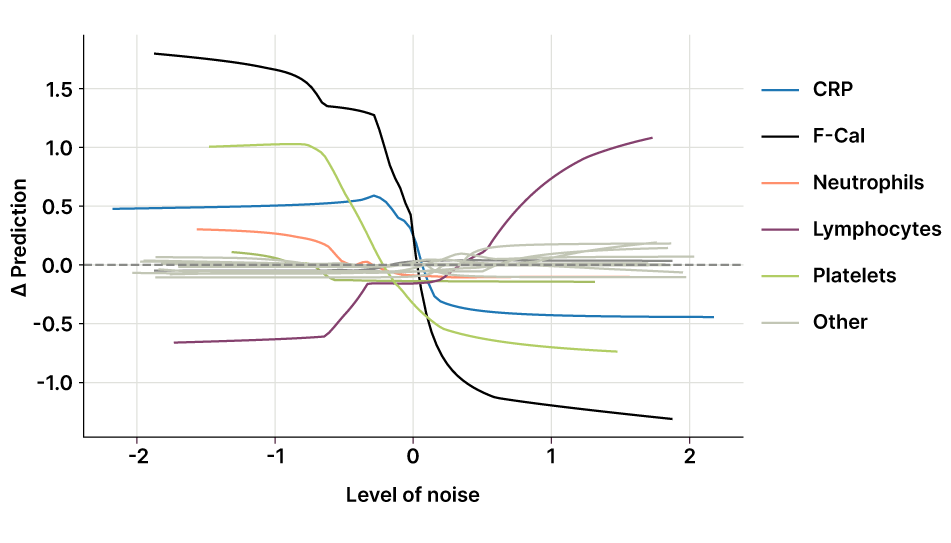}}\quad
      \subfigure[Best-performing \ourmethod{} groups]{\label{fig:CD_biom_contribution_best}\includegraphics[width=0.47\textwidth]{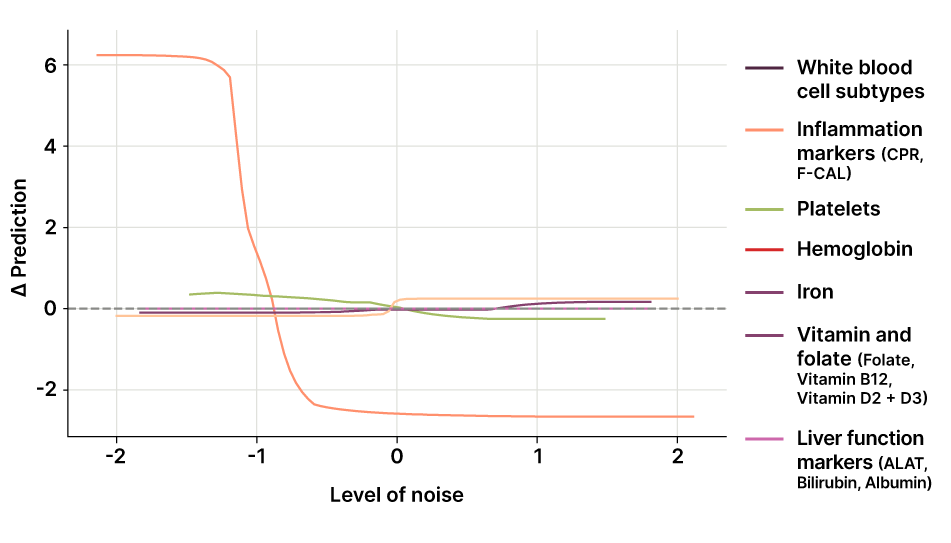}}
    \caption{Subset-level contribution curves for Crohn’s Disease prediction. Each curve shows how the \ourmethod{}'s output changes as increasing noise is added to the latent representation of a biomarker group. (a) uses individual biomarkers; (b) uses physiologically coherent groups.}
    \label{fig:CD_set_contributions}
    \vspace{-1.2em}
\end{figure}
\\

For each grouping strategy, we trained a separate instance of \ourmethod{}. To quantify the importance of each biomarker subset, we use \Cref{eq:set_importance}, measuring how the model’s prediction changes when increasing the noise added to the features of nodes within that subset. To introduce noise in a structured way, we apply PCA \citep{abdi2010principal} to the feature vectors of all nodes corresponding to biomarkers in the group, and progressively perturb the input along the 1st principal component. This procedure is performed independently for each subset and allows us to assess the model’s sensitivity to perturbations in biologically meaningful groupings, offering insight into the relative predictive weight of each group. Importantly, \ourmethod{} is interpretable by design, which makes perturbation analysis directly reflect its internal computation. Unlike post-hoc attribution, we do not estimate influence indirectly but observe how the additive contribution changes. Adding structured noise (via PCA) and measuring the resulting prediction shift faithfully traces the model’s internal mechanism rather than relying on an external approximation. Thus, perturbation effects align exactly with feature importance, making the experiment both natural and principled. 
\Cref{fig:CD_set_contributions} presents the results of the subset-level attribution analysis. In the single-biomarker setting (\Cref{fig:CD_biom_contribution_singles}), F-Cal, platelets, and lymphocytes show strong directional effects on model output. F-Cal and platelets are positively associated with CD risk, while lymphocytes have an inverse effect, findings that are both biologically grounded and consistent with prior work \citep{vestergaard2023characterizing}. In the clinically-coherent group setting (\Cref{fig:CD_biom_contribution_best}), the inflammation subset emerges as the most influential, with a pronounced non-linear effect on predictions, aligning with established diagnostic relevance in CD \citep{vestergaard2023characterizing}. 
We note that the above also serves as a quantitative measure of faithfulness, as in \ourmethod{}, interpretability is not post-hoc but built into the model architecture by design. Specifically, the contributions of nodes, graphs, or subsets are explicitly and additively used in computing the final prediction. This means that importance scores correspond directly to the actual values that are summed to produce the output label, making them faithful by construction.

\subsection{Fake-News Detection}

\paragraph{Data}
\begin{wraptable}{R}{0.4\textwidth}

\centering
\caption{Evaluation of \ourmethod{} on the Gossipcop (GOS) fake news detection dataset.}
\label{tab:fake_news}
\begin{tabular}{lc}
\toprule
\textbf{Methods} & \textbf{Accuracy} \\
\midrule
GATv2 & 96.10 ± 0.3\\
GraphConv &  96.77 ± 0.1\\
GraphSage &  94.45 ± 1.5\\
GCNFN &  96.52 ± 0.2 \\

\midrule
\ourmethod & \textbf{ 97.34 ± 0.2} \\
\bottomrule
\end{tabular}
\end{wraptable}

GossipCop (GOS) is a dataset of news articles annotated by professional journalists and fact-checkers, containing both content-based labels and social context information verified through the GossipCop fact-checking platform. It is composed of 5,464 tree-structured graphs based on sharing information, where the news
article is the root node and sharing users are subsequent nodes in the cascade, with edges signifying sharing relationships.
Each node in these graphs is associated with 4 features: 768-dimensional embeddings generated using a pretrained BERT model on user historical posts, 300-dimensional embeddings from a pretrained word2vec \citep{mikolov2013efficient} on the same historical posts, 10-dimensional features extracted from user profiles, and a 310-dimensional ”content” feature that combines the 300-dimensional embedding of user comments with the 10-dimensional profile features. Each graph is labelled to indicate whether it originates from a fake news post or not.
We decomposed the tree into a set of directed graphs rooted at the origin, each representing a distinct path of news propagation.

\paragraph{Setup} While \ourmethod{} can be applied to sets of graphs with any structure, the baselines used in the CD and P12 experiments are limited to path-like graphs and are unable to act on more complex graphs as in this dataset. Therefore, we instead evaluate \ourmethod{} against $4$ GNNs, including GATv2 \citep{brody2022attentivegraphattentionnetworks}, GraphConv \citep{morris2021weisfeiler}, GraphSAGE \citep{hamilton2018inductiverepresentationlearninglarge} and GCNFN \citep{monti2019fakenewsdetectionsocial}.
We use random splits of train (80\%), validation (10\%), and test (10\%) sets over the data and selected the best performing model over the validation set. We then report the average Accuracy score and std of the selected configuration with three seeds. Since many subgraphs reduce to a single node after decomposition, we group all size-one graphs into a shared subset and combine them non-linearly. For the remaining graphs, whose identities are not uniquely distinguishable, we apply a shared \featuregroupgnan{}. In total, we use two distinct \featuregroupgnan{} instances for this experiment.

\begin{figure}
    \centering
    \includegraphics[width=1\linewidth]{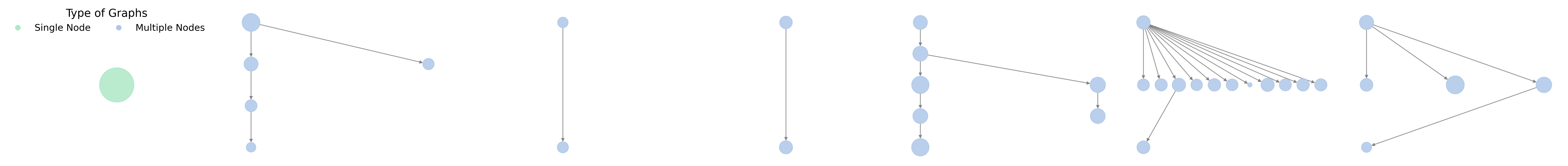}
    \caption{Node importance for fake-news spread graphs, over the GOS dataset. The node size corresponds to its importance learned by \ourmethod{}, according to \Cref{eq:node_importance}. All graphs with a single node are grouped into one subset. Therefore, the importance is provided on the subset level rather than the node level (green node).}
    \label{fig:fake_news_figure}
\end{figure}
\paragraph{Results and node-level importance}
Results are provided in \Cref{tab:fake_news}. \ourmethod{} outperforms all baselines, with an uplift of 0.57 accuracy points. \Cref{fig:fake_news_figure} presents the node importance of a random sample from the datasets, where the size of a node corresponds to its importance score according to \Cref{eq:node_importance}.

\subsubsection{Ablation study}
\label{ablation_studies}
\begin{wraptable}{R}{0.53\textwidth}
\centering
\caption{Ablation study of SuperMAN components.}
\label{tab:comp_ablations}
\begin{tabular}{l c}
\hline
\textbf{Ablation} & \textbf{AUPRC drop} \\
\hline
(i) DeepSet$\rightarrow$mean pooling & \textit{- 19.98\% $\pm$ .28 \%} \\
(ii) $\rho$$\rightarrow$1 & \textit{- 12.39\% $\pm$ 1.39\%} \\
(iii) ExtGNAN$\rightarrow$MLP & \textit{- 15.00\% $\pm$ 2.09 \%} \\
(iv) ExtGNAN$\rightarrow$Identity & \textit{- 17.70\% $\pm$ 0.15\%} \\
(v) ExtGNAN$\rightarrow$GNAN & \textit{- 4.38 \% $\pm$ 2.85\%} \\
\hline
\end{tabular}
\end{wraptable}
Finally, we carry out an ablation study on the CD dataset, to isolate the impact of key components on performance. Specifically, we take the best-performing configuration from the training-data grid search and re-train it with individual components ablated to assess their effect on performance. We test: (i) replacing DeepSet with mean pooling to assess the value of learned non-linear aggregation; (ii) replacing the distance function NN ($\rho$) with a constant 1 value. 
\\

This tests the usefulness of structural information; (iii) substituting ExtGNAN with a node-wise MLP to test the importance of graph inductive bias; (iv) replacing ExtGNAN with an identity mapping as a lower bound without feature learning; and (v) using standard GNAN (no multivariate feature groups) to evaluate the benefit of grouped feature processing. We report the performance degradation (AUPRC difference) in Table~\ref{tab:comp_ablations}. 
The ablation results demonstrate that the core components of \ourmethod{} are critical to its effectiveness, as their removal consistently leads to notable performance degradation.

\section{Conclusion}
We introduced Super Mixing Additive Networks (\ourmethod), a framework for learning from sets of graphs that represent irregular and asynchronous temporal signals. \ourmethod{} was designed to handle real-world scenarios where multiple signal types are collected at uneven intervals, such as medical records with heterogeneous blood tests or event logs in complex systems. The framework combined strong predictive performance with built-in interpretability, offering importance scores at the node, graph, and subset levels. \ourmethod{} allows practitioners to integrate domain priors when available, trading fine-grained interpretability for greater expressivity. Across experiments on real-world high-stakes tasks, \ourmethod{} achieved state-of-the-art results. Beyond predictive accuracy, \ourmethod’s interpretability capabilities proved particularly valuable in domains like healthcare, where uncovering phase transitions and providing actionable insights is critical.
\newpage

\bibliography{iclr2026_conference}
\bibliographystyle{iclr2026_conference}

\appendix
\section{Appendix}

\section{Theoretical Framework}
\label{app:theory_interpretability}

\subsection{Proof of Theorem  3.1}
We will prove that \ourmethod{} is strictly more expressive than GNAN.
To prove this, we use a ground truth function that is a feature-level XOR.
Let a single-node graph be endowed with binary features 
\(x=(x_1,x_2)\in\{0,1\}^2\) and define the target  
\(f_{\oplus}(x)=x_1\oplus x_2\).

First we will show that GNAN cannot express \(f_{\oplus}\).
A GNAN scores the graph by
\(\hat{y}=\sigma\!\bigl(\phi_1(x_1)+\phi_2(x_2)\bigr)\),
where each \(\phi_i\) is univariate.  
Put \(a=\phi_1(0),\,b=\phi_1(1),\,c=\phi_2(0),\,d=\phi_2(1)\).
To match the XOR truth-table there must exist a threshold \(\tau\) such that  
\[
a+c<\tau,\quad b+c>\tau,\quad a+d>\tau,\quad b+d<\tau.
\]
Summing the first and last inequalities yields \(a+b+c+d<2\tau\),  
while the middle pair gives \(a+b+c+d>2\tau\)---a contradiction.  
Thus no GNAN realises \(f_{\oplus}\).

Now we will show that \ourmethod{} can express \(f_{\oplus}\).
Place the two features in the same subset \(F=\{x_1,x_2\}\) and choose the
subset-network  
\[
\psi_F(x_1,x_2)=x_1+x_2-2x_1x_2.
\]
For the four binary inputs this mapping returns \((0,1,1,0)\), exactly \(f_{\oplus}\).  
Hence  \ourmethod{} represents a function unattainable by GNAN, proving that  \ourmethod{} is strictly 
more expressive.

\subsection{Proof of Theorem  3.2}

  Let $S$ be a set of graphs $\{G_i\}_{j=1}^m$. Let   $S_1 = \{S_i\}_{i=1}^m$ be a partition of $S$ such that $|S_i|=1$. Let $S_2 = \{S_i\}_{i=1}^k$ such that there exists $k$ with $|S_k|>1$. 
    with a subset partition $\{S_i\}_{i=1}^k$. We will prove that a \ourmethod{} trained over $S_2$ is strictly more expressive than a \ourmethod{} trained over $S_1$. 

To prove this, we use a ground truth function that is a set-level XOR.
Let every graph $G_i$ carry a single binary feature $x_i\in\{0,1\}$ and let the
\textsf{ExtGNAN} encoder return this feature unchanged, i.e.\ $h(G_i)=x_i$.
Denote a set containing two graphs by $S=\{G_1,G_2\}$ and define the
permutation-invariant target
\[
f_\oplus(S)=x_1\oplus x_2 .
\]

\paragraph{Singleton partition ($S_1$).}
If each graph is placed in its own subset, \ourmethod{} aggregates \emph{additively}: the
model output is
\[
\hat y \;=\;\phi(x_1)+\phi(x_2),
\]
because the final \ourmethod{} stage simply sums subset scores :contentReference[oaicite:0]{index=0}:contentReference[oaicite:1]{index=1}.
Write $a=\phi(0)$ and $b=\phi(1)$.  To realise $f_\oplus$ via a threshold
$\tau$ we would need
\[
a+a<\tau,\quad b+a>\tau,\quad a+b>\tau,\quad b+b<\tau.
\]
Adding the first and last inequalities yields $a+b<\tau$, while the middle
pair gives $a+b>\tau$—a contradiction.  Hence \ourmethod{}$_{S_1}$ cannot represent
$\,f_\oplus$.

\paragraph{Paired partition ($S_2$).}
Group the two graphs together and use a DeepSet
$\Phi(S_2)=g\!\bigl(\sum_{i=1}^2 f(x_i)\bigr)$ with
$f(x)=x$ and $g(s)=s(2-s)$.  Then
\[
g\bigl(x_1+x_2\bigr)=
\begin{cases}
0 & (x_1,x_2)=(0,0)\text{ or }(1,1),\\
1 & (x_1,x_2)=(0,1)\text{ or }(1,0),
\end{cases}
\]
exactly $f_\oplus$.  The final \ourmethod{} sum over feature channels leaves this
value unchanged, so \ourmethod{}$_{S_2}$ realises $\,f_\oplus$.

\paragraph{Strict separation.}
Because $f_\oplus$ is representable by \ourmethod{}$_{S_2}$ but not by \ourmethod{}$_{S_1}$, the former is strictly more expressive.

\subsection{Four-Point Condition and Recoverability}

\label{app:4point}
We now turn to structural identifiability. We prove that when input graphs are connected, acyclic, and positively weighted (i.e., trees), the pairwise distance matrix learned by \ourmethod{} encodes the full structure of the graph, up to isomorphism. This provides theoretical justification for the model’s ability to reason over temporal structure without needing explicit graph supervision.

The following theorem shows that if the graph satisfies the four‑point condition \citep{buneman1974note}, \ourmethod{} can reconstruct the original graph from the transformed distance matrix that is fed to \ourmethod{} as the graph input:

\begin{theorem}
    \label{thm:path_recovery}
    Let $G$ be a graph represented by an adjacency matrix $A$, and $D$ be the transformed distance-matrix for  \ourmethod{}.  Then if $D$ satisfies the four‑point condition, \ourmethod{} can learn $\rho$ such that $\rho(D) = A$.
\end{theorem}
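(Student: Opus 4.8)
The plan is to derive the statement from the classical theory of tree metrics. First I would invoke Buneman's theorem \citep{buneman1974note}: a symmetric, nonnegative function on the vertex pairs of $G$ that vanishes on the diagonal and satisfies the four-point condition is exactly the path-length metric of a weighted tree with positive edge lengths, and that tree is unique up to isomorphism. Since $D$ is assumed to satisfy the four-point condition, it therefore determines a unique tree $T_D$; and since $G$ is itself a connected, acyclic, positively weighted graph (a tree) whose transformed distance matrix is $D$, uniqueness forces $T_D\cong G$. In particular the edge set of $G$, hence the adjacency matrix $A$, is a deterministic function of $D$ alone, with no explicit graph supervision needed — which is the conceptual content of the theorem.

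Second, I would make that function explicit so that it is manifestly realizable by a shape network. The tool is the standard betweenness characterization of tree edges: for $u\neq v$, the pair $(u,v)$ is an edge of $T_D$ iff no third vertex lies on the $u$–$v$ path, i.e.\ $D_{uw}+D_{wv}>D_{uv}$ for every $w\notin\{u,v\}$; equivalently $A_{uv}=\mathbf{1}\{u\neq v\}\cdot\mathbf{1}\{\min_{w\notin\{u,v\}}(D_{uw}+D_{wv}-D_{uv})>0\}$. In the unweighted case this collapses to the elementary rule $A_{uv}=\mathbf{1}\{D_{uv}=1\}$, which is realized by a one-dimensional step function (or a smooth surrogate) applied entrywise — i.e.\ directly by a choice of the scalar distance-shape function $\rho$, giving $\rho(D)=A$ as stated. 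In the general positively weighted case the reconstruction map $D\mapsto A$ is locally constant away from a measure-zero set of degenerate configurations, hence continuous on each region and uniformly approximable there; since the paper allows $\rho$ to be a neural network of arbitrary depth and width, a standard universal-approximation argument then produces parameters $\theta$ with $\rho(D;\theta)$ equal to $A$ on the admissible inputs (exactly so after the thresholding performed by the prediction head).

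The main obstacle I anticipate is the bookkeeping in the second step. Two points need care: (i) verifying that $G$ being a tree indeed guarantees $D$ meets the hypotheses of Buneman's theorem — this is routine but one must pin down the convention by which \ourmethod{} transforms raw signed time-deltas into the symmetric matrix $D$, and rule out (or handle) zero-length edges; and (ii) bridging the gap between "$A$ is a function of $D$" and "$\rho$ can realize it," which is immediate in the unweighted regime but, for genuinely varying edge weights, forces one either to restrict to normalized weights or to read $\rho$ as acting on the distance matrix through the universal-approximation packaging above. The remaining ingredients — uniqueness up to isomorphism and the betweenness characterization — are classical and can be cited rather than reproved.
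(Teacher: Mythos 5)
Your proposal is correct in substance but follows a genuinely different route from the paper's. The paper's proof immediately specializes to positively weighted \emph{path} graphs: it cites Buneman's theorem for uniqueness of the realizing tree (noting that for a path the only automorphism is the left--right reversal), and then reconstructs $G$ by an endpoint-sorting algorithm --- pick $s=\argmax_v \max_u D_{vu}$, order the vertices by $D_{s\cdot}$, and read off consecutive differences as edge weights. You instead work with general trees and use the betweenness characterization of tree edges, $A_{uv}=\mathbf{1}\{\min_{w\notin\{u,v\}}(D_{uw}+D_{wv}-D_{uv})>0\}$, which is more general (it covers the non-path propagation trees the paper also feeds to the model) at the cost of an $O(n^3)$ rather than $O(n^2)$ reconstruction. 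The more important difference is that you actually engage with the clause ``\ourmethod{} can learn $\rho$ such that $\rho(D)=A$,'' whereas the paper's proof stops at ``$D$ determines $G$ up to isomorphism'' and never constructs or even mentions $\rho$. Your observation that this step is only immediate in the unweighted regime is a real issue: as defined in Section 3.1, $\rho$ is a \emph{univariate} map $\mathbb{R}\to\mathbb{R}$ applied entrywise, and for a genuinely weighted tree an edge of length $3$ and a two-edge path of total length $3$ produce identical entries of $D$, so no entrywise univariate $\rho$ can separate them; your proposed fixes (restricting the weight regime, or reinterpreting $\rho$ as acting on the whole matrix and invoking universal approximation) are exactly what would be needed to make the theorem's literal claim hold, and this is a gap the paper's own proof silently leaves open rather than one you have introduced.
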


\begin{proof}
Let $G=(V,E,w)$ be a positively weighted path graph, i.e.
\[
V=\{v_1,\dots,v_n\}, \qquad
E=\bigl\{\{v_i,v_{i+1}\}\;\bigl|\;i=1,\dots,n-1\bigr\}, \qquad
w(\{v_i,v_{i+1}\})>0 .
\]
Define the pair‑wise distance matrix $D\in\mathbb R^{n\times n}$ by
\[
D_{uv} \;=\; 
\sum_{e\in P_G(u,v)} w(e)\;,
\]
where $P_G(u,v)$ is the unique $u$–$v$ path in $G$.
Then:

\begin{enumerate}[label=\textnormal{(\alph*)}]
\item Tree‑metric property.  
      $D$ satisfies the four‑point condition of Buneman \citep{buneman1974note}; hence $(V,D)$ is a \emph{tree metric}.

\item Uniqueness (\emph{no information loss}).  
      By Buneman’s theorem the tree that realises $D$ is unique up to isomorphism.  
      For a path graph the only automorphism is the reversal
      $(v_1,\dots,v_n)\mapsto(v_n,\dots,v_1)$, so $D$ determines $G$
      completely except for left–right orientation.

\item Efficient reconstruction. 
      $G$ can be reconstructed from $D$ in $O(n^{2})$ time:

      \begin{enumerate}[label=\textnormal{\roman*.}, nosep]
      \item Choose an endpoint  
            $s=\arg\max_{v\in V}\max_{u\in V} D_{vu}$.
      \item Order the vertices  
            $v_1=s,\,v_2,\dots,v_n$ so that
            $D_{sv_1}<D_{sv_2}<\dots<D_{sv_n}$.
      \item Set edge weights  
            $w(\{v_i,v_{i+1}\}) = D_{sv_{i+1}} - D_{sv_i}$\; for $i=1,\dots,n-1$.
      \end{enumerate}
\end{enumerate}

\end{proof}

\section{Computational Complexity and Efficient Implementation}
In this section, we provide a big-O analysis of the time complexity of \ourmethod{}. 
We also provide an efficient approach to implement \ourmethod{}. 

\paragraph{Computational Complexity}
The computational complexity of \ourmethod{} is as follows:

\begin{itemize}
    \item \textbf{Scales linearly with the number of graphs $m$:} Each graph is processed independently or in small subsets, so total cost is $\mathcal{O}(m)$ assuming fixed per-graph cost.
    \item \textbf{Scales quadratically with the number of nodes per graph $n$:} Due to the dense aggregation over all node pairs in ExtGNAN, the per-graph cost is $\mathcal{O}(n^2)$.
    \item \textbf{Overall complexity} for a set of graphs is:
\end{itemize}

\[
\mathcal{O}\!\left(m \cdot K \cdot n^2 \cdot d_{\psi}\right)
\]

where $K$ is the number of feature groups and $d_{\psi}$ is the cost of evaluating the multivariate neural networks. 

\paragraph{Adapting the complexity}
We note that while the per-graph complexity of ExtGNAN is $O(n^2)$ in the most general case, the distance function $\Delta$ can be masked to enforce any desired sparsity pattern between nodes, and therefore adapt to any complexity limitation, including a linear one. For example, one can define the $\Delta$ to only be the distance between adjacent nodes in a trajectory, and then it will be both linear in memory and time.

\paragraph{Efficient Implementations}
In the main paper, we present \ourmethod{} with the objective of maximal clarity, e.g., by presenting vector entry-wise operations. Nonetheless, the operations of \ourmethod{} can be done in an optimized fashion for GPU, through tensor operations.

\section{Dataset Details}
\label{app:datasets}

\subsection{PhysioNet P12}
\label{app:p12}
We provide the full list of the 36 physiological signals and 3 static patient features used in our experiments.

\begin{enumerate}
    \item Alkaline phosphatase (ALP): A liver- and bone-derived enzyme; elevations suggest cholestasis, bone disease, or hepatic injury. \\

    \item Alanine transaminase (ALT): Hepatocellular enzyme; increased values mark acute or chronic liver cell damage. \\

    \item Aspartate transaminase (AST): Enzyme in liver, heart, and muscle; rises indicate hepatocellular or muscular injury. \\

    \item Albumin: Major plasma protein maintaining oncotic pressure and transport; low levels reflect inflammation, malnutrition, or liver dysfunction. \\

    \item Blood urea nitrogen (BUN): End-product of protein catabolism cleared by the kidneys; elevation signals renal impairment or high catabolic state. \\

    \item Bilirubin: Hemoglobin breakdown product processed by the liver; accumulation indicates hepatobiliary disease or hemolysis. \\

    \item Cholesterol: Circulating lipid essential for membranes and hormones; dysregulation is linked to cardiovascular risk. \\

    \item Creatinine: Waste from muscle metabolism filtered by the kidneys; higher levels imply reduced glomerular filtration. \\

    \item Invasive diastolic arterial blood pressure (DiasABP): Pressure during ventricular relaxation; low readings may reflect vasodilation or hypovolemia. \\

    \item Fraction of inspired oxygen (FiO$_2$): Proportion of oxygen delivered; values above ambient air denote supplemental therapy. \\

    \item Glasgow Coma Score (GCS): Composite neurologic score for eye, verbal, and motor responses; scores $\leq8$ indicate severe impairment. \\

    \item Glucose: Principal blood sugar; hypo- or hyper-glycemia can cause neurologic compromise and metabolic instability. \\

    \item Serum bicarbonate (HCO$_3$): Key extracellular buffer; low levels signal metabolic acidosis, high levels metabolic alkalosis or compensation. \\

    \item Hematocrit (HCT): Percentage of blood volume occupied by red cells; reduced values denote anemia, elevated values hemoconcentration. \\

    \item Heart rate (HR): Beats per minute reflecting cardiac demand; tachycardia indicates stress or shock, bradycardia conduction disorders. \\

    \item Serum potassium (K): Crucial intracellular cation; deviations predispose to dangerous arrhythmias. \\

    \item Lactate: By-product of anaerobic metabolism; elevation marks tissue hypoxia and shock severity. \\

    \item Invasive mean arterial blood pressure (MAP): Time-weighted average arterial pressure; low values threaten organ perfusion. \\

    \item Mechanical ventilation flag (MechVent): Binary indicator of ventilatory support; presence denotes respiratory failure or peri-operative care. \\

    \item Serum magnesium (Mg): Cofactor for numerous enzymatic reactions; abnormalities contribute to arrhythmias and neuromuscular instability. \\

    \item Non-invasive diastolic arterial blood pressure (NIDiasABP): Cuff-derived diastolic pressure; trends mirror vascular tone without an arterial line. \\

    \item Non-invasive mean arterial blood pressure (NIMAP): Cuff-based mean pressure; used when invasive monitoring is unavailable. \\

    \item Non-invasive systolic arterial blood pressure (NISysABP): Cuff-derived systolic pressure; elevations suggest hypertension or pain response. \\

    \item Serum sodium (Na): Principal extracellular cation governing osmolality; dysnatremias cause neurologic symptoms and fluid shifts. \\

    \item Partial pressure of arterial carbon dioxide (PaCO$_2$): Indicator of ventilatory status; hypercapnia implies hypoventilation, hypocapnia hyperventilation. \\

    \item Partial pressure of arterial oxygen (PaO$_2$): Measure of oxygenation efficiency; low values denote hypoxemia. \\

    \item Arterial pH: Measure of hydrogen-ion concentration; deviations from normal reflect systemic acid–base disorders. \\

    \item Platelet count (Platelets): Thrombocyte concentration essential for hemostasis; low counts increase bleeding risk, high counts thrombosis risk. \\

    \item Respiration rate (RespRate): Breaths per minute; tachypnea signals metabolic acidosis or hypoxia, bradypnea central depression. \\

    \item Hemoglobin oxygen saturation (SaO$_2$): Percentage of hemoglobin bound to oxygen; values below normal indicate significant hypoxemia. \\

    \item Invasive systolic arterial blood pressure (SysABP): Peak pressure during ventricular ejection; extremes compromise end-organ perfusion. \\

    \item Body temperature: Core temperature; fever suggests infection, hypothermia exposure or metabolic dysfunction. \\

    \item Troponin I: Cardiac-specific regulatory protein; elevation confirms myocardial injury. \\

    \item Troponin T: Isoform of cardiac troponin complex; rise parallels Troponin I in detecting myocardial necrosis. \\

    \item Urine: Hourly urine volume as a gauge of renal perfusion; oliguria signals kidney hypoperfusion or failure. \\

    \item White blood cell count (WBC): Reflects immune activity; leukocytosis suggests infection or stress, leukopenia marrow suppression or severe sepsis. \\
\end{enumerate}

\noindent\textbf{Static patient features:} Age; Gender; \emph{ICUType} – categorical code for the admitting intensive care unit (1 = Coronary Care, 2 = Cardiac Surgery Recovery, 3 = Medical ICU, 4 = Surgical ICU), capturing differences in case mix and treatment environment.

\subsection{Crohn's Disease Prediction}
\label{app:cd_dataset}
We detail the full list of the 17 biomarkers extracted from the Danish health registries.

\begin{enumerate}
    \item C-reactive protein (CRP): A protein produced by the liver in response to inflammation. Elevated CRP indicates active inflammation, often associated with inflammatory diseases like CD. \\

    \item Faecal Calprotectin (F-Cal): A protein released from neutrophils into the intestinal lumen, detectable in stool samples. Elevated levels indicate gastrointestinal inflammation and are commonly used to detect and monitor inflammatory bowel disease. \\
   
    \item Leukocytes (White Blood Cells): Cells that are central to the body’s immune response. Elevated leukocyte counts typically suggest infection or inflammation, including flare-ups in CD. \\

    \item Neutrophils: A type of leukocyte involved, among other things, in fighting bacterial infections. High neutrophil counts often indicate acute inflammation or infection, including intestinal inflammation in CD. \\

    \item Lymphocytes: A group of white blood cells that form the core of the adaptive immune system, including T cells, B cells, and natural killer (NK) cells. They are responsible for antigen-specific immune responses. Abnormal levels can signal immune dysregulation, often implicated in autoimmune and chronic inflammatory diseases such as CD. \\
    
    \item Monocytes: A type of white blood cell that circulates in the blood and differentiates into macrophages or dendritic cells upon entering tissues. These cells are essential for phagocytosis, antigen presentation, and regulation of inflammation. Elevated levels may reflect immune activation or tissue damage. \\

    \item Eosinophils: Immune cells involved primarily in allergic reactions and parasitic infections. Elevated eosinophil counts might reflect allergic responses or gastrointestinal inflammation. \\
   
    \item Basophils: The least common type of leukocyte, involved in allergic and inflammatory responses. Their elevation is uncommon but may accompany certain inflammatory or allergic conditions. \\

    \item Platelets: Cell fragments critical for blood clotting and also involved in inflammatory responses. High platelet counts (thrombocytosis) are commonly seen during active inflammation in conditions like CD. \\

    \item Hemoglobin (Hb): The protein in red blood cells responsible for oxygen transport. Low hemoglobin (anemia) is frequently observed in chronic inflammatory conditions such as CD due to blood loss or nutrient deficiencies. \\

    \item Iron: An essential mineral for red blood cell production. Low iron levels often indicate chronic blood loss or malabsorption, both common in CD due to intestinal inflammation. \\

    \item Folate (Vitamin B9): A vitamin necessary for red blood cell production and DNA synthesis. Deficiency may result from impaired absorption in inflamed intestinal tissue. \\

    \item Vitamin B12 (Cobalamin): Required for red blood cell production and neurological function. Deficiencies are common in CD, especially when the ileum is affected. \\

    \item Vitamin D2+D3 (Ergocalciferol + Cholecalciferol): Vitamins essential for bone health and immune regulation. Low levels are often seen in CD due to malabsorption and systemic inflammation. \\

    \item ALAT (Alanine Aminotransferase): An enzyme indicating liver function. Elevated levels may reflect liver inflammation, medication effects, or co-occurring autoimmune liver disease. \\

    \item Albumin: A protein produced by the liver that helps maintain blood volume and transport nutrients. Low albumin can reflect chronic inflammation, malnutrition, or protein loss in CD. \\

    \item Bilirubin: A compound produced from red blood cell breakdown. It is filtered by the liver and excreted into the intestine via bile. Elevated levels may indicate liver dysfunction, bile duct obstruction, or hemolytic anemia. \\
\end{enumerate}

\subsubsection{Clinical Context and Related Work for predicting CD onset}
\label{app:cd_related_work}
Research on predicting the onset of CD has explored a range of approaches, including the use of routinely measured blood-based biomarkers and more complex biological data derived from multi-omics technologies.

Several studies have assessed the predictive potential of standard clinical blood tests. For example, \citep{vestergaard2023characterizing} analyzed six routine biomarkers from 1,186 Danish patients eventually diagnosed with CD, achieving moderate predictive performance (AUROC of 0.74) approximately six months before clinical diagnosis. Larger-scale analyses, such as those using UK Biobank data, combined multiple standard biomarkers and basic demographic information, reporting similar predictive performances (AUROCs typically between 0.70–0.75). These analyses generally utilized methods like logistic regression, random forests, or gradient-boosted trees, favored for structured clinical datasets.

Other studies have integrated advanced biochemical data, known as multi-omics, including large-scale protein measurements (proteomics), metabolites (metabolomics), or genomic markers. \citep{garg2024disease} for instance, combined 67 blood biomarkers with approximately 2,900 plasma proteins from the UK Biobank, achieving an AUROC of 0.786. \citep{woerner2025plasma} combined genetic risk scores with extensive proteomic data, achieving an AUROC of 0.76 for CD prediction up to five years prior to diagnosis. Similar multi-omics approaches employing microbiome profiling, immune signaling molecules (cytokines), or lipid molecules typically achieve AUROCs between 0.75 and 0.80 but often involve significant cost, specialized laboratory analyses, and reduced consistency across diverse patient cohorts.

Overall, routine blood tests provide meaningful predictive signals for CD onset, while integrating complex biochemical measurements can improve predictive accuracy, albeit at greater cost, complexity, and variability across clinical populations.

\section{Biomarker Subset Groupings}
\label{app:biomarker_groups}

\subsection{P12}
\label{app:biomarker_groups_P12}
In the PhysioNet P12 task, we grouped the 36 physiological signals into one multivariate subset and 29 singleton subsets. Domain knowledge showed that only the respiratory and gas-exchange variables shared sufficiently strong, coherent dynamics to benefit from joint modeling. All other signals were physiologically diverse, so they were left as singletons to retain their unique predictive information.

\begin{enumerate}
    \item \textbf{Arterial blood gas profile} \\
    \textit{[pH, PaCO$_2$, PaO$_2$, SaO$_2$, HCO$_3$, FiO$_2$]} \\
    This group captures systemic acid–base status (pH, HCO$_3$), carbon dioxide clearance (PaCO$_2$), oxygenation (PaO$_2$, SaO$_2$), and inspired oxygen fraction (FiO$_2$). Together they form the canonical arterial blood gas panel, enabling the model to detect respiratory derangements such as hypoxemia, hypercapnia, or metabolic compensation.

    \item \textbf{Complete blood count} \\
    \textit{[WBC, HCT, Platelets]} \\
    This cluster summarizes hematologic composition by measuring leukocyte-mediated immune response (WBC), oxygen-carrying capacity (HCT), and clotting potential (Platelets). Joint modeling supports recognition of systemic inflammation, anemia, and coagulopathy.

    \item \textbf{Comprehensive metabolic panel} \\
    \textit{[Glucose, Na, K, Mg, BUN, Creatinine]} \\
    These biomarkers represent key substrates and electrolytes (Glucose, Na, K, Mg) and renal waste products (BUN, Creatinine). Grouping them provides a unified view of metabolic balance, electrolyte homeostasis, and kidney function.

    \item \textbf{Liver function tests} \\
    \textit{[ALT, AST, ALP, Albumin, Bilirubin]} \\
    These biomarkers assess hepatocellular injury (ALT, AST), cholestasis (ALP, Bilirubin), and hepatic synthetic function (Albumin). Their combined interpretation reflects multiple dimensions of liver health.

    \item \textbf{Lipid and cardiac markers} \\
    \textit{[Cholesterol, TroponinI, TroponinT, HR]} \\
    This group integrates lipid metabolism (Cholesterol), cardiac injury markers (Troponin I, Troponin T), and heart rate (HR). Together, they provide insight into cardiovascular stress, myocardial injury, and metabolic risk.

    \item \textbf{Blood pressure profiles} \\
    \textit{[SysABP, DiasABP, MAP, NISysABP, NIDiasABP, NIMAP]} \\
    These variables capture invasive and non-invasive arterial blood pressures, reflecting systemic hemodynamics. Grouping them enables the model to learn coherent pressure dynamics rather than treating each measurement in isolation.

    \item \textbf{Ventilation mechanics} \\
    \textit{[RespRate, MechVent]} \\
    This group reflects mechanical and physiological components of ventilation. Their joint dynamics provide context for interpreting respiratory compensation and ventilatory support.

    \item \textbf{Tissue perfusion} \\
    \textit{[Lactate, Urine]} \\
    Elevated lactate indicates anaerobic metabolism, while urine output tracks renal perfusion. Together they provide complementary signals of global tissue perfusion and shock severity.

    \item \textbf{Global status indicators} \\
    \textit{[GCS, Temp]} \\
    These variables capture overall neurologic responsiveness (GCS) and systemic temperature regulation (Temp), providing global context on patient stability and severity of illness.
\end{enumerate}

\subsection{CD}
\label{app:biomarker_groups_CD}
In the Crohn’s Disease prediction task, we grouped the 17 selected biomarkers into 7 subsets based on shared physiological function, clinical relevance, and correlated patterns observed in exploratory analyses. This configuration produced the most robust and interpretable results, balancing domain knowledge with empirical performance. The grouping is as follows:

\begin{enumerate}[leftmargin=2em]
    \item \textbf{White blood cell subtypes} \\
    \textit{[Leukocytes, Neutrophils, Lymphocytes, Monocytes, Eosinophils, Basophils]} \\
    These biomarkers all represent components of the immune system’s cellular response. Grouping them enables the model to learn shared immune activation patterns, which are known to be dysregulated in inflammatory bowel diseases like CD. Combining them in a multivariate subset captures both their relative proportions and total counts, which are clinically relevant for distinguishing inflammation subtypes.

    \item \textbf{Inflammation markers} \\
    \textit{[CRP, Faecal Calprotectin]} \\
    These are key indicators of systemic and intestinal inflammation, respectively. CRP reflects acute-phase liver response, while F-Cal is specific to intestinal neutrophilic activity. Though mechanistically distinct, both are strongly correlated with inflammatory disease activity and complement each other in modeling CD-specific inflammation signatures.

    \item \textbf{Platelets} \\
    \textit{[Platelets]} \\
    Thrombocytosis (elevated platelet count) is a well-established marker of chronic inflammation. As platelet behavior is relatively independent from other hematological and nutritional markers, we model it as its own trajectory.

    \item \textbf{Hemoglobin} \\
    \textit{[Hemoglobin]} \\
    Hemoglobin concentration is a direct measure of anemia, which is prevalent in CD patients due to chronic blood loss and inflammation-induced iron sequestration. Its temporal dynamics often diverge from those of other blood components, warranting a separate representation.

    \item \textbf{Iron status} \\
    \textit{[Iron]} \\
    Iron metabolism is tightly linked to both hemoglobin levels and systemic inflammation but shows distinct dynamics. Modeling it separately allows the model to learn delayed or decoupled effects (e.g., iron deficiency preceding hemoglobin drop).

    \item \textbf{Vitamin and folate status} \\
    \textit{[Folate, Vitamin B12, Vitamin D2+D3]} \\
    These nutrients are absorbed in different regions of the gastrointestinal tract (e.g., B12 in the ileum, folate in the jejunum), and their deficiency profiles can be informative of CD location and severity. Grouping them allows the model to detect joint patterns of malabsorption and systemic nutrient depletion.

    \item \textbf{Liver function markers} \\
    \textit{[ALAT, Bilirubin, Albumin]} \\
    These biomarkers reflect hepatic function and protein synthesis. Abnormal liver enzymes and hypoalbuminemia are frequently observed in CD due to medication effects, chronic inflammation, or comorbid autoimmune liver disease. Combining them supports learning of systemic inflammatory effects beyond the gut.
\end{enumerate}

This grouping reflects known biological relationships, enhances the interpretability of the model’s subset-level attributions, and improves performance compared to unstructured or purely univariate representations. It enables \ourmethod{} to exploit interactions among related features while maintaining a modular structure that aligns with clinical reasoning.

\section{Experimental Setup, Hyperparameter Choices and Grouping Configurations}
\label{app:exp_setup}

This section outlines key implementation choices and model settings used in our experiments, including the manually tuned biomarker grouping configurations that served as an important hyperparameter for performance and interpretability.

\subsection{Hyper-Parameters}
We trained all models with 100 epochs using the Adam optimizer with weight decay 1e-5. We used a ReduceLROnPlateau scheduler with a max learning rate in the {1e-2, 1e-4} range, min learning rate in the {1e-7, 1e-8} range, factor in the {0.2-0.9} range, and patience=100

We trained all models with batch size of range \{16, 32\}, dropout rate in \{0.1, 0.2\}, number of layers in the \{3, 4, 5\} range, hidden channels in the \{32, 64\} range.

Random seeds were fixed for reproducibility, and results are reported across three independent runs. All models were trained on a single NVIDIA Tesla V100-PCIE-16GB GPU.

\subsection{Grouping Configurations for Clinical Tasks}

In both clinical tasks, the configuration of input feature subsets (i.e., how we grouped input biomarkers into multivariate trajectories) was treated as a manually tuned hyperparameter. These groupings determine how \ourmethod{} combines individual graph representations prior to final prediction, and they affect both the expressivity and interpretability of the model.

\paragraph{In-Hospital Mortality (P12).} We compared \ourmethod{}’s performance under the following grouping strategies:
\begin{itemize}

    \item \textbf{No grouping}:  Each biomarker is in its own size one subset.
    \item \textbf{Respiratory} - one group of the biomarkers: \textit{[FiO$_2$, PaO$_2$, PaCO$_2$, SaO$_2$, RespRate, pH, MechVent]} and the rest are singletons. 
    \item \textbf{Metabolic Electrolytes} - one group of the biomarkers: \textit{[Na, K, Mg, HCO3, Lactate, Glucose]} and the rest are singletons. 
    \item \textbf{ Liver Panel} - one group of the biomarkers: \textit{[ALT, AST, ALP, Albumin, Bilirubin, Cholesterol]} and the rest are singletons.
    \item \textbf{Pathway-Based Grouping}: Biomarkers are organised based on their molecular or mechanistic roles, grouping them by their function in metabolism, homeostasis, or cellular composition: 
    Energy metabolism [Glucose, Cholesterol, Lactate]; 
nitrogen waste clearance [BUN, Creatinine, Urine]; 
protein synthesis and enzymes [Albumin, ALT, AST]; 
liver function and cholestasis [ALP, Bilirubin]; 
acid--base balance [pH, HCO$_3$]; 
gas transport [PaO$_2$, PaCO$_2$, SaO$_2$, FiO$_2$]; 
mineral homeostasis [Na, K, Mg]; 
hematologic composition [HCT, Platelets, WBC]; 
cardiovascular dynamics [SysABP, DiasABP, MAP, NISysABP, NIDiasABP, NIMAP, HR]; 
respiratory mechanics [RespRate, MechVent]; 
and cardiac injury [TroponinI, TroponinT]. 
Global status indicators are grouped separately as [Temp, GCS]. 

    \item \textbf{Organ-System Grouping} Biomarkers are organized around organ systems and clinical monitoring domains (respiratory support, cardiovascular dynamics etc: 
oxygenation support [SaO$_2$, PaO$_2$, FiO$_2$, MechVent]; 
ventilation and acid--base balance [PaCO$_2$, RespRate, pH, HCO$_3$]; 
cardiovascular dynamics [HR, MAP, SysABP, DiasABP, NIMAP, NISysABP, NIDiasABP]; 
perfusion and renal function [Urine, Lactate, Creatinine, BUN]; 
hepatobiliary function [Bilirubin, ALT, AST, ALP, Albumin]; 
inflammation and coagulation [WBC, Platelets]; 
electrolyte and oxygen-carrying capacity [Na, K, Mg, HCT]; 
metabolic reserve [Glucose, Cholesterol]; 
myocardial injury [TroponinI, TroponinT]; 
and global status indicators [Temp, GCS]. 
\end{itemize}

\paragraph{Crohn’s Disease Onset.} We evaluated several grouping configurations:
\begin{itemize}
    \item \textbf{No grouping}: Each biomarker is in its own size one subset.
    \item \textbf{Biologically driven grouping} (see Appendix~\ref{app:biomarker_groups_CD}): Biomarkers are grouped into 7 clinically coherent subsets (e.g., inflammation markers, immune cell subtypes, liver function).
    \item \textbf{Diagnostic Panel Grouping}: A clinically motivated grouping that mirrors standard blood test panels used in routine diagnostics.
    \item \textbf{Data-driven grouping}: Groupings are derived from clustering biomarkers based on the complementary signal in their attribution curves (see Section~\ref{clinical_insights_through_interpretability}).
    \item \textbf{Merged coarse groupings}: Broad categories such as inflammation, haematology, and micronutrients.
    \item \textbf{Minimal Pairwise Interaction }: Emphasizes minimal yet informative combinations that capture key axes of immune, inflammatory, and metabolic variational proximity.
\end{itemize}

We report the AUPRC of different biomarker groupings for predicting the onset of CD in Table \ref{tab:cd_groupings}

\begin{table}[ht]
\centering
\caption{Performance using different biomarker subsets for CD prediction.}
\label{tab:cd_groupings}
\scriptsize
\begin{tabularx}{\textwidth}{lXc}
\toprule
\textbf{Grouping Strategy} & \textbf{Biomarker Groups} & \textbf{AUPRC} \\
\midrule

\textbf{Flat grouping} & 
Each biomarker is modeled independently: CRP, F-Cal, Leukocytes, Neutrophils, Lymphocytes, Monocytes, Eosinophils, Basophils, Platelets, Hemoglobin, Iron, Folate, Vitamin B12, Vitamin D2+D3, ALAT, Albumin, Bilirubin.
& 79.66 ± 0.96 \\

\textbf{Biologically driven grouping} & 
\begin{itemize}[leftmargin=*]
    \item White blood cell subtypes: Leukocytes, Neutrophils, Lymphocytes, Monocytes, Eosinophils, Basophils
    \item Inflammatory markers: CRP, F-Cal
    \item Platelets
    \item Haemoglobin
    \item Iron
    \item Vitamin and folate status: Folate, Vitamin B12, Vitamin D2 + D3
    \item Liver function markers: ALAT, Albumin, Bilirubin
\end{itemize}
& 83.93 ± 0.27 \\

\textbf{Diagnostic Panel Grouping} & 
\begin{itemize}[leftmargin=*]
    \item Inflammatory markers: CRP, F-Cal
    \item WBC count (main): Leukocytes, Neutrophils, Lymphocytes
    \item WBC rare subtypes: Monocytes, Eosinophils, Basophils
    \item Platelets + Hemoglobin: Platelets, Hemoglobin
    \item Nutrient panel: Iron, Folate, Vitamin B12, Vitamin D2+D3
    \item Liver function test panel: ALAT, Albumin, Bilirubin
\end{itemize}
& 79.06 ± 3.4 \\

\textbf{Data-driven grouping} & 
\begin{itemize}[leftmargin=*]
    \item Acute Inflammation Markers: CRP, F-Cal, Platelets
    \item Immune–Iron Axis: Lymphocytes, Iron
    \item Neutrophils
    \item Monocytes
    \item Eosinophils
    \item Basophils
    \item Hemoglobin
    \item Folate
    \item Vitamin B12
    \item Vitamin D
    \item ALAT
    \item Bilirubin
    \item Albumin
    \item Total Leukocytes
\end{itemize}
& 79.04 ± 7.8 \\

\textbf{Merged coarse groupings} & 
\begin{itemize}[leftmargin=*]
    \item F-Cal (local inflammation)
    \item Systemic immune/inflammation: CRP, Leukocytes, Neutrophils, Lymphocytes, Monocytes
    \item Allergy-linked eosinophils/basophils
    \item Hematological status: Platelets, Hemoglobin
    \item Nutritional status: Iron, Folate, Vitamin B12, Vitamin D2+D3
    \item Hepatic status: ALAT, Albumin, Bilirubin
\end{itemize}
& 75.21 ± 2.3 \\

\textbf{Minimal Pairwise Interaction} & 
\begin{itemize}[leftmargin=*]
    \item CRP + F-Cal
    \item Leukocytes + Neutrophils
    \item Lymphocytes + Monocytes
    \item Eosinophils + Basophils
    \item Platelets
    \item Hemoglobin
    \item Iron + Folate
    \item Vitamin B12 + Vitamin D2+D3
    \item ALAT + Albumin
    \item Bilirubin
\end{itemize}
& 81.50 ± 1.3 \\

\bottomrule
\end{tabularx}
\end{table}

\section{\textbf{Calibration and Robustness Analyses}}
\label{app:calibration_robustness}

Here we report additional evidence on the reliability (calibration) and robustness of \ourmethod{} on the Crohn’s Disease (CD) task.

\subsection{Calibration}
We compute Expected Calibration Error (ECE) for \ourmethod{} and all baselines on the CD task. ECE is reported as mean $\pm$ std over three data splits. Lower values indicate better agreement between predicted probabilities and observed outcome frequencies. As shown in Table~\ref{tab:ece_cd}, \ourmethod{} attains the lowest ECE among the compared methods.

\begin{table}[t]
\centering
\vspace{2mm}
\begin{tabular}{l c}
\toprule
Model & ECE (mean $\pm$ std) \\
\midrule
\ourmethod{} & $0.028 \pm 0.004$ \\
Raindrop & $0.040 \pm 0.070$ \\
DGM2 & $0.035 \pm 0.001$ \\
mTAND & $2.16 \pm 0.54$ \\
Transformer & $3.32 \pm 1.08$ \\
Trans-mean & $3.30 \pm 1.03$ \\
SEFT & $2.40 \pm 0.71$ \\
\bottomrule
\end{tabular}
\caption{Expected Calibration Error (ECE) on the CD task, reported as mean $\pm$ std over three splits.}
\label{tab:ece_cd}
\end{table}

To complement ECE, we provide a Q--Q calibration plot for \ourmethod in Fig.~\ref{fig:qq_gman}. The plot compares binned predicted probabilities against empirical outcome frequencies. Close alignment with the diagonal indicates good calibration. The near-diagonal trend is consistent with \ourmethod’s low ECE.

\begin{figure}[t]
    \centering
    \includegraphics[width=0.6\linewidth]{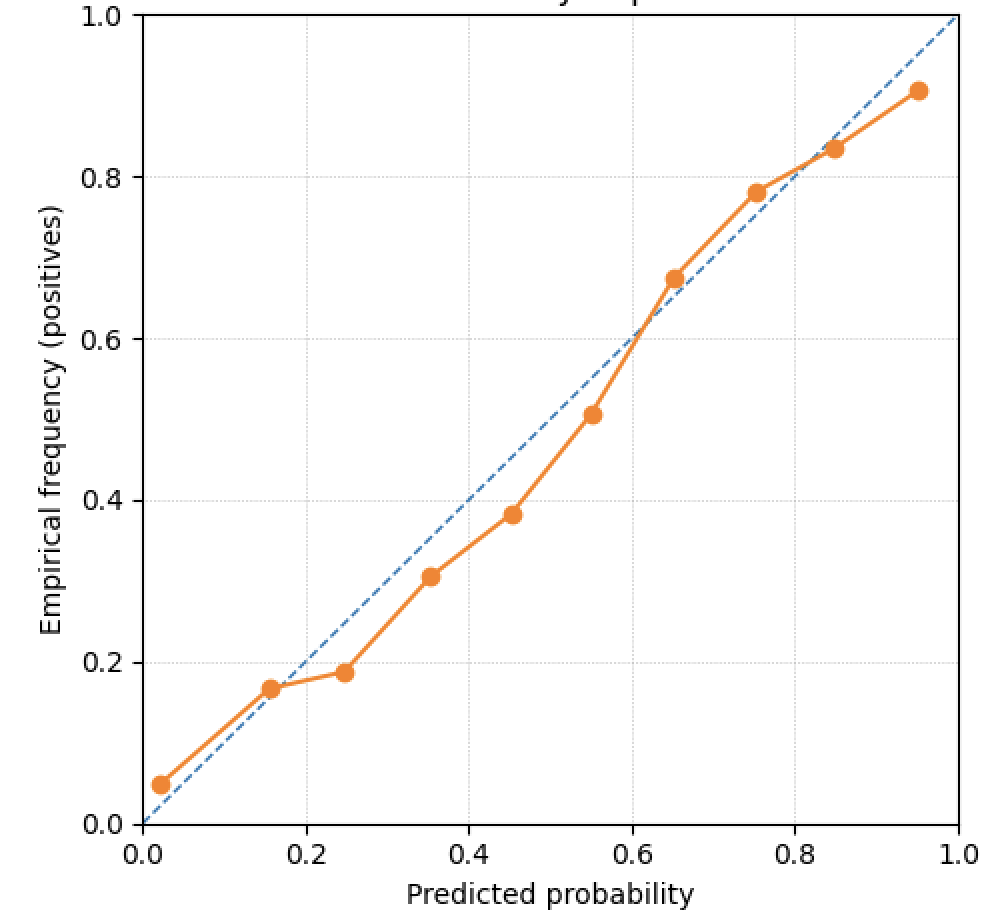}

    \caption{Q--Q calibration plot (reliability diagram) for \ourmethod{} on the CD task.}
    \label{fig:qq_gman}
\end{figure}

\subsection{Noise Robustness under Controlled Distribution Shift}

We evaluate robustness under controlled distribution shifts by training all models on clean data and injecting noise only at test time. For each noise level, we re-compute AUROC and AUPRC and report the relative performance drop $\delta$ (percentage change) from the clean-test baseline.

\paragraph{Additive value noise.}
For each biomarker trajectory graph, we perturb the primary biomarker feature (feature index $0$ corresponding to the observed biomarker value) at every node by adding zero-mean Gaussian noise with fixed standard deviation $\sigma_{\text{value}}$, independent of the feature’s magnitude. If $v$ is the original biomarker value, we sample $\epsilon \sim \mathcal{N}(0,1)$ and use
$v' = v + \epsilon \cdot \sigma_{\text{value}}$.
This models absolute measurement variability with a uniform noise scale. The results are shown in Table ~\ref{tab:noise_add}

\begin{table}[h]
\small
\begin{tabular}{|c|cc|cc|cc|}
\hline
 & \multicolumn{2}{c|}{\ourmethod} & \multicolumn{2}{c|}{Raindrop} & \multicolumn{2}{c|}{mTAND} \\
$\sigma_v^{+}$ 
 & $\Delta$AUROC & $\Delta$AUPRC 
 & $\Delta$AUROC & $\Delta$AUPRC 
 & $\Delta$AUROC & $\Delta$AUPRC \\
\hline
0.0 
 & $0.00 \pm 0.00\%$ & $0.00 \pm 0.00\%$
 & $0.00 \pm 0.00\%$ & $0.00 \pm 0.00\%$
 & $0.00 \pm 0.00\%$ & $0.00 \pm 0.00\%$ \\
0.1 
 & $-0.07 \pm 0.11\%$ & $-0.07 \pm 0.08\%$
 & $-26.20 \pm 1.97\%$ & $-28.72 \pm 2.54\%$
 & $-0.82 \pm 0.81\%$ & $-0.71 \pm 0.64\%$ \\
0.2 
 & $-0.10 \pm 0.15\%$ & $-0.02 \pm 0.52\%$
 & $-32.82 \pm 0.96\%$ & $-36.70 \pm 1.49\%$
 & $-2.98 \pm 1.23\%$ & $-2.68 \pm 0.98\%$ \\
0.3 
 & $-0.18 \pm 0.34\%$ & $-0.35 \pm 0.47\%$
 & $-34.74 \pm 1.36\%$ & $-38.69 \pm 1.78\%$
 & $-5.58 \pm 1.71\%$ & $-5.16 \pm 1.06\%$ \\
0.5 
 & $-0.51 \pm 0.35\%$ & $-0.37 \pm 1.11\%$
 & $-36.35 \pm 1.54\%$ & $-40.60 \pm 1.79\%$
 & $-10.97 \pm 1.32\%$ & $-10.84 \pm 0.73\%$ \\
0.8 
 & $-1.32 \pm 0.97\%$ & $-1.20 \pm 1.85\%$
 & $-37.25 \pm 1.59\%$ & $-41.41 \pm 1.68\%$
 & $-17.13 \pm 1.76\%$ & $-17.97 \pm 2.58\%$ \\
1.5 
 & $-1.87 \pm 2.48\%$ & $-1.31 \pm 3.96\%$
 & $-37.73 \pm 1.34\%$ & $-42.27 \pm 1.33\%$
 & $-26.67 \pm 1.44\%$ & $-29.26 \pm 2.05\%$ \\
3.0 
 & $-5.46 \pm 5.10\%$ & $-4.81 \pm 7.06\%$
 & $-37.70 \pm 1.52\%$ & $-42.39 \pm 1.49\%$
 & $-34.00 \pm 1.34\%$ & $-36.41 \pm 2.36\%$ \\
5.5 
 & $-7.35 \pm 5.45\%$ & $-7.48 \pm 7.05\%$
 & $-38.07 \pm 1.45\%$ & $-42.34 \pm 1.73\%$
 & $-38.56 \pm 1.21\%$ & $-41.31 \pm 1.58\%$ \\
7.0 
 & $-9.39 \pm 5.22\%$ & $-9.82 \pm 7.54\%$
 & $-37.59 \pm 1.60\%$ & $-42.37 \pm 1.72\%$
 & $-38.44 \pm 1.67\%$ & $-41.40 \pm 1.63\%$ \\
\hline
\end{tabular}
\caption{$\Delta$AUROC and $\Delta$AUPRC under additive value noise ($\sigma_v^{+}$).}
\label{tab:noise_add}
\end{table}

\paragraph{Multiplicative value noise.}
Using the same feature, we instead add zero-mean Gaussian noise whose scale is proportional to the absolute feature value. If $v$ is the original value, we set the noise scale to $|v|\cdot\sigma_{\text{value}}$ and sample
$v' = v + \epsilon \cdot |v| \cdot \sigma_{\text{value}} \approx v \cdot (1 + \epsilon \cdot \sigma_{\text{value}})$.
This models relative measurement/units variability, where larger measurements incur larger absolute perturbations. The results are shown in Table ~\ref{tab:noise_mult}

\begin{table}[h]

\small
\begin{tabular}{|c|cc|cc|cc|}
\hline
 & \multicolumn{2}{c|}{\ourmethod} & \multicolumn{2}{c|}{Raindrop} & \multicolumn{2}{c|}{mTAND} \\
$\sigma_v^{*}$ 
 & $\Delta$AUROC & $\Delta$AUPRC 
 & $\Delta$AUROC & $\Delta$AUPRC 
 & $\Delta$AUROC & $\Delta$AUPRC \\
\hline
0.0 
 & $0.00 \pm 0.00\%$ & $0.00 \pm 0.00\%$
 & $0.00 \pm 0.00\%$ & $0.00 \pm 0.00\%$
 & $0.00 \pm 0.00\%$ & $0.00 \pm 0.00\%$ \\
0.1 
 & $-0.04 \pm 0.02\%$ & $0.00 \pm 0.04\%$
 & $0.04 \pm 1.11\%$ & $0.02 \pm 0.65\%$
 & $-0.35 \pm 0.99\%$ & $-0.35 \pm 0.63\%$ \\
0.2 
 & $0.09 \pm 0.24\%$ & $0.26 \pm 0.41\%$
 & $-0.21 \pm 1.15\%$ & $-0.27 \pm 0.64\%$
 & $-1.36 \pm 0.75\%$ & $-1.36 \pm 0.29\%$ \\
0.3 
 & $-0.15 \pm 0.48\%$ & $-0.02 \pm 0.54\%$
 & $-0.40 \pm 1.20\%$ & $-0.40 \pm 0.60\%$
 & $-2.98 \pm 1.21\%$ & $-3.18 \pm 1.14\%$ \\
0.5 
 & $-0.57 \pm 0.56\%$ & $-0.39 \pm 0.89\%$
 & $-1.28 \pm 1.31\%$ & $-1.26 \pm 0.66\%$
 & $-7.61 \pm 1.40\%$ & $-7.79 \pm 1.08\%$ \\
0.8 
 & $-0.78 \pm 0.66\%$ & $-0.68 \pm 1.70\%$
 & $-2.45 \pm 0.99\%$ & $-2.44 \pm 0.47\%$
 & $-13.79 \pm 1.30\%$ & $-13.25 \pm 1.11\%$ \\
1.5 
 & $-1.82 \pm 1.98\%$ & $-1.71 \pm 3.41\%$
 & $-3.19 \pm 1.57\%$ & $-3.16 \pm 1.18\%$
 & $-24.22 \pm 1.96\%$ & $-22.58 \pm 1.45\%$ \\
3.0 
 & $-4.63 \pm 3.22\%$ & $-4.89 \pm 5.29\%$
 & $-4.56 \pm 1.18\%$ & $-4.60 \pm 0.51\%$
 & $-31.53 \pm 1.61\%$ & $-29.08 \pm 1.56\%$ \\
5.5 
 & $-4.91 \pm 4.85\%$ & $-5.31 \pm 5.82\%$
 & $-6.41 \pm 1.51\%$ & $-6.20 \pm 0.84\%$
 & $-35.16 \pm 1.09\%$ & $-32.07 \pm 1.10\%$ \\
7.0 
 & $-7.45 \pm 5.28\%$ & $-8.15 \pm 6.59\%$
 & $-7.23 \pm 1.80\%$ & $-7.37 \pm 0.65\%$
 & $-36.74 \pm 1.82\%$ & $-33.55 \pm 2.34\%$ \\
\hline
\end{tabular}
\caption{$\Delta$AUROC and $\Delta$AUPRC under multiplicative value noise ($\sigma_v^{*}$ ).}
\label{tab:noise_mult}
\end{table}

\paragraph{Temporal noise.}
We inject zero-mean Gaussian noise into the temporal distance matrix at test time, with standard deviation $\sigma_{\text{time}}$ (in days). This perturbs pairwise time gaps between visits while keeping biomarker values and graph structure unchanged. The results are shown in Table ~\ref{tab:noise_temp}

\begin{table}[h]
\small
\begin{tabular}{|c|cc|cc|cc|}
\hline
 & \multicolumn{2}{c|}{\ourmethod} & \multicolumn{2}{c|}{Raindrop} & \multicolumn{2}{c|}{mTAND} \\
$\sigma_t$ (days)
 & $\Delta$AUROC & $\Delta$AUPRC 
 & $\Delta$AUROC & $\Delta$AUPRC 
 & $\Delta$AUROC & $\Delta$AUPRC \\
\hline
0.0 
 & $0.00 \pm 0.00\%$ & $0.00 \pm 0.00\%$
 & $0.00 \pm 0.00\%$ & $0.00 \pm 0.00\%$
 & $0.00 \pm 0.00\%$ & $0.00 \pm 0.00\%$ \\
10.0 
 & $0.02 \pm 0.17\%$ & $0.03 \pm 0.16\%$
 & $-0.54 \pm 1.17\%$ & $-0.72 \pm 0.44\%$
 & $-2.17 \pm 0.77\%$ & $-2.16 \pm 0.65\%$ \\
30.0 
 & $-0.04 \pm 0.05\%$ & $0.03 \pm 0.04\%$
 & $-0.82 \pm 1.05\%$ & $-1.44 \pm 0.53\%$
 & $-1.94 \pm 1.06\%$ & $-1.89 \pm 0.80\%$ \\
90.0 
 & $-0.22 \pm 0.13\%$ & $-0.39 \pm 0.35\%$
 & $-1.58 \pm 0.87\%$ & $-2.20 \pm 0.15\%$
 & $-1.99 \pm 0.71\%$ & $-2.02 \pm 0.56\%$ \\
150.0 
 & $-0.22 \pm 0.09\%$ & $-0.32 \pm 0.08\%$
 & $-1.89 \pm 0.86\%$ & $-2.75 \pm 0.63\%$
 & $-2.24 \pm 1.13\%$ & $-2.28 \pm 0.71\%$ \\
300.0 
 & $-0.45 \pm 0.42\%$ & $-0.87 \pm 0.78\%$
 & $-2.41 \pm 1.21\%$ & $-3.26 \pm 0.47\%$
 & $-1.85 \pm 0.99\%$ & $-1.81 \pm 0.74\%$ \\
500.0 
 & $-0.74 \pm 0.04\%$ & $-1.12 \pm 0.29\%$
 & $-2.63 \pm 1.24\%$ & $-3.73 \pm 0.90\%$
 & $-2.14 \pm 0.97\%$ & $-2.11 \pm 0.70\%$ \\
\hline
\end{tabular}
\caption{Relative change ($\Delta$) in AUROC and AUPRC under temporal noise.}
\label{tab:noise_temp}
\end{table}

\section{LLM Usage}
We relied on Large Language Models solely for grammar and spelling checks, without using them to generate or modify the scientific content.

\end{document}